\documentclass{article} 
\usepackage[preprint]{colm2026_conference}
\usepackage{booktabs}
\usepackage{pifont}

\usepackage{graphicx}  
\newcommand{\methodname}{\textsc{SDAR}\xspace}
 \usepackage{amsthm}
\newtheorem{proposition}{Proposition} 
\usepackage{microtype}
\usepackage{hyperref}
\usepackage{algorithm}
\usepackage{algpseudocode}
\usepackage{url}
\usepackage{booktabs}
\usepackage{xcolor}  
\usepackage{colortbl}
\usepackage{xspace}
\usepackage{soul}       
\newcolumntype{C}{>{\centering\arraybackslash}p{0.038\textwidth}}
\definecolor{topcolor}{RGB}{252, 236, 196}
\definecolor{secondcolor}{RGB}{223, 235, 253}
\usepackage[most,skins,theorems]{tcolorbox}
\definecolor{darkgreen}{RGB}{0,128,0}
\newcommand{\mycomment}[1]{\textcolor{darkgreen}{\textit{// #1}}}
\newcommand{\posval}[1]{{\textcolor{darkgreen}{#1}}}

\newtcolorbox{templatebox}[1]{
  enhanced,
  unbreakable,
  colback=white,
  colframe=black!65,
  colbacktitle=black!80,
  coltitle=white,
  boxrule=0.9pt,
  arc=2pt,
  left=6pt,
  right=6pt,
  top=6pt,
  bottom=6pt,
  title={#1},
  fonttitle=\bfseries,
  sharp corners,
  boxed title style={sharp corners, boxrule=0pt}
}

\usepackage{lineno}

\definecolor{darkblue}{rgb}{0, 0, 0.5}
\hypersetup{colorlinks=true, citecolor=darkblue, linkcolor=darkblue, urlcolor=darkblue}
\AddToShipoutPictureBG{%
  \ifnum\value{page}=1
    \AtPageUpperLeft{%
      \put(\LenToUnit{3.8cm},\LenToUnit{-2cm}){%
        \includegraphics[height=0.8cm]{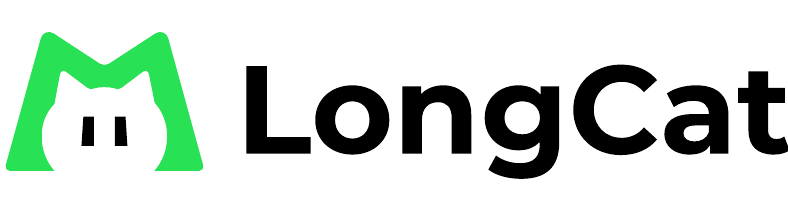}%
      }%
    }%
  \fi
}

\title{Self-Distilled Agentic Reinforcement Learning}

\author{
\textbf{Zhengxi Lu}$^{1,2}$\thanks{Work done during internship at Meituan.},~ \textbf{Zhiyuan Yao}$^{1,2}$, 
\textbf{Zhuowen Han}$^{2}$, \textbf{Zi-Han Wang}$^{2,3}$, \textbf{Jinyang Wu}$^{3}$\\
\textbf{Qi Gu}$^{2}$\thanks{Corresponding author},~
\textbf{Xunliang Cai}$^{2}$,
\textbf{Weiming Lu}$^{1}$, 
\textbf{Jun Xiao}$^{1}$,
\textbf{Yueting Zhuang}$^{1}$,
 \textbf{Yongliang Shen}$^{1}$\footnotemark[2]\\[3pt]
  $^1$Zhejiang University \qquad$^2$Meituan \qquad$^3$Tsinghua University\\
  \texttt{\small \{zhengxilu, syl\}@zju.edu.cn\qquad guqi03@meituan.com} \\
  \begin{tabular}{@{}ll@{}}
  \end{tabular}}
\usepackage{graphicx}  
\usepackage{wrapfig}   
\usepackage{caption}   
\begin{document}

\ifcolmsubmission
\linenumbers
\fi

\maketitle
\vspace{-15pt}

\begin{abstract}
Reinforcement learning (RL) has emerged as a central paradigm for post-training LLM agents, yet its trajectory-level reward signal provides only coarse supervision for long-horizon interaction. 
On-Policy Self-Distillation (OPSD) complements RL by introducing dense token-level guidance from a teacher branch augmented with privileged context. 
However, transferring OPSD to multi-turn agents proves problematic: compounding multi-turn instability destabilizes supervision, while skill-conditioned privileged guidance requires asymmetric treatment for negative teacher rejections may arise from imperfect skills retrieval or utilization.
We introduce \textbf{SDAR} (\textbf{S}elf-\textbf{D}istilled \textbf{A}gentic \textbf{R}einforcement Learning), which treats OPSD as a gated auxiliary objective while keeping RL as the primary optimization backbone. 
\methodname{} maps detached token-level signals into a sigmoid gate, strengthening distillation on teacher-endorsed positive-gap tokens and softly attenuating negative teacher rejections. 
Across the Qwen2.5 and Qwen3 families on ALFWorld, WebShop, and Search-QA, \methodname{} substantially improves over GRPO (+9.4\% on ALFWorld, +7.0\% on Search-QA, +10.2\% on WebShop-Acc), avoids the instability of naive GRPO+OPSD, and consistently outperforms hybrid RL--OPSD baselines across model scales. Code available: \url{https://github.com/ZJU-REAL/SDAR}.


\end{abstract}

\begin{figure}[h]
\centering
\includegraphics[width=0.97\columnwidth]{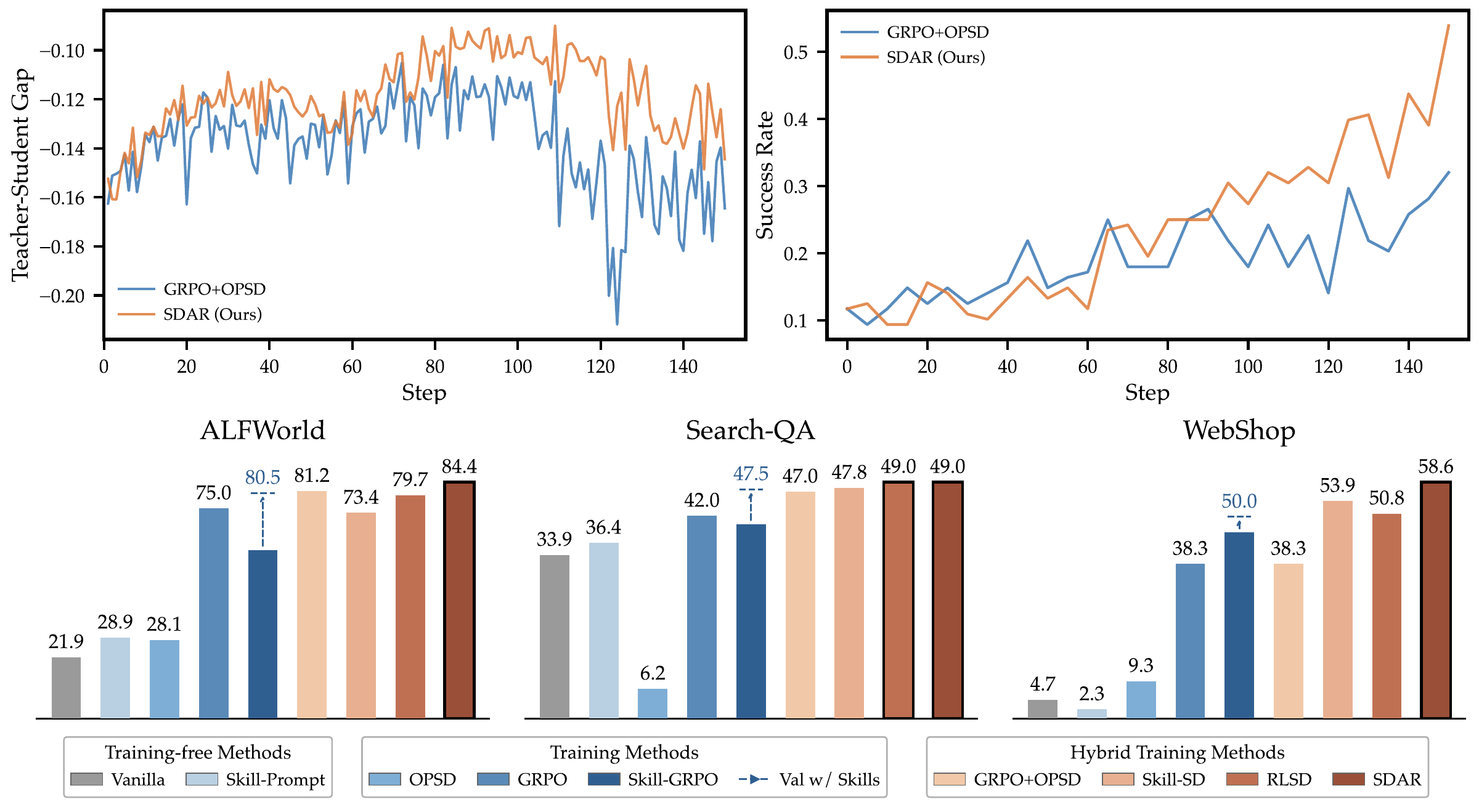}
\caption{(a) Comparison between GRPO+OPSD and \methodname{}; (b) Overall Performance.}
\label{fig:teaser}
\end{figure}
\section{Introduction}



Agentic post-training has become a central challenge for Large Language Models (LLMs)~\citep{guo2025ds-r1,team2025kimi,yang2025qwen3,comanici2025gemini,team2026longcat-2601}. 
Unlike static single-turn reasoning, multi-turn agents interact with environments over extended horizons, where each action changes future observations and each generated response becomes part of the context for subsequent decisions~\citep{shen2023hugginggpt,shi2025toollearning,jimenez2023swebench}.

Two paradigms naturally emerge as complementary forces:
Reinforcement Learning (RL)~\citep{shao2024deepseekmath,dong2025arpo,feng2025gigpo} provides task-level optimization grounded in environment or verifier feedback, whereas On-Policy Distillation (OPD)~\citep{ye2026opcd,yang2026g-opd,coreteam2026mimov2,glm5team2026glm5} and On-Policy Self-Distillation (OPSD)~\citep{zhao2026opsd,he2026sdzero,zhang2026embarrassinglysd} provide dense token-level guidance from a teacher branch. 
Yet, OPSD does not transfer cleanly to multi-turn agent training. 
We attribute this to two observations: \textbf{[1] Multi-turn OPSD Instability} and \textbf{[2] Asymmetric Trust in Privileged Guidance}.

\paragraph{[Observation-1] Multi-turn OPSD Instability} Once the student agent inevitably drifts

\vspace{-0.4\baselineskip}
\begin{wrapfigure}{r}{0.60\linewidth}
\vspace{-1\baselineskip}
\centering
\includegraphics[width=\linewidth]{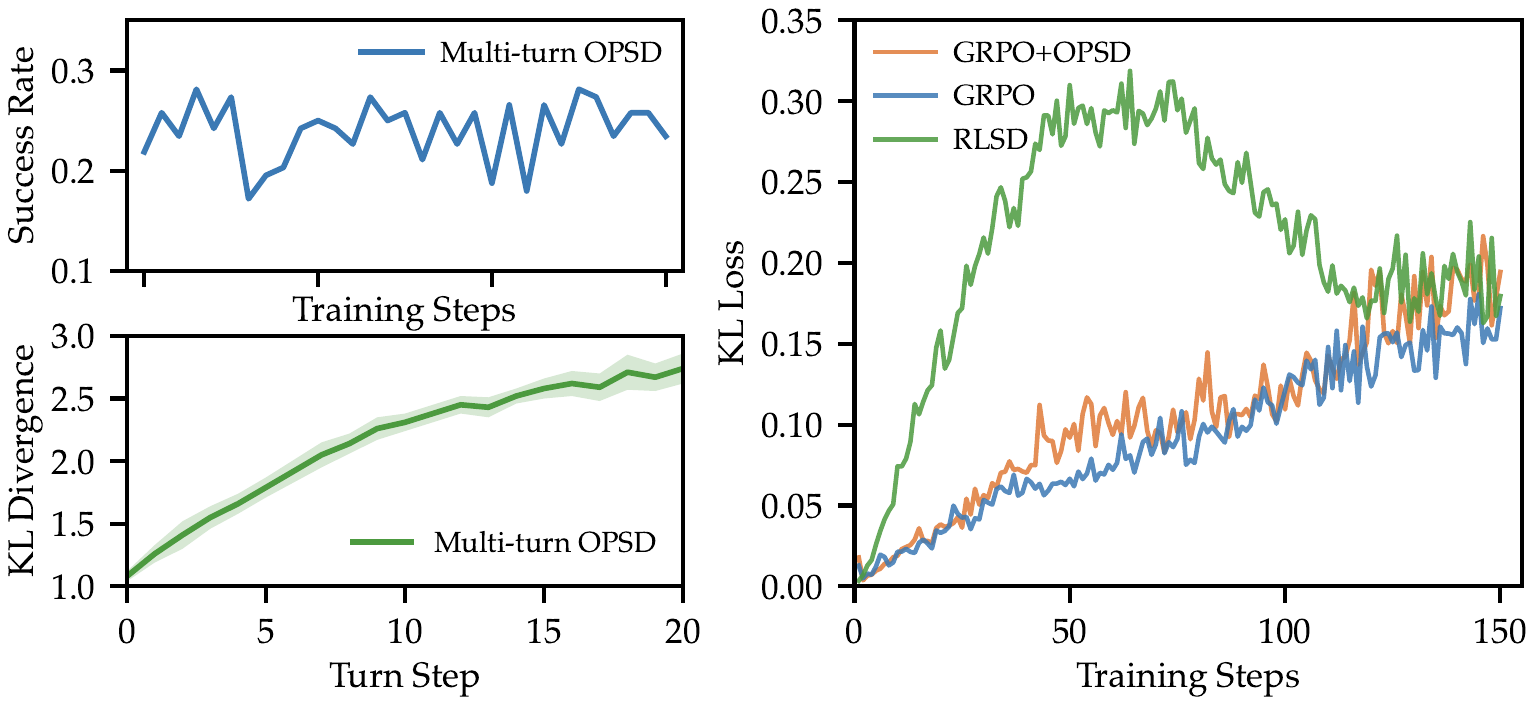}
\captionsetup{font=small, skip=6pt}
\caption{\textbf{Left}: Multi-turn OPSD Instability, with performance and KL reported. \textbf{Right}: RLSD-Style Instability, with KL loss.}
\label{fig:instability}
\vspace{-1.2\baselineskip}
\end{wrapfigure}
from the teacher-supported trajectory, the once-helpful token-level supervision becomes increasingly unreliable. This compounding error leads to surging per-turn KL divergence and catastrophic degradation in task performance, as shown in Figure~\ref{fig:instability} (Left).
TCOD~\citep{wang2026tcod} attempts to address this through curriculum learning, but relies on rigid temporal schedules or trajectory-depth thresholds.

 %
%


\paragraph{[Observation-2] Asymmetric Trust in Privileged Guidance.}
In OPSD, the teacher branch is not an independently stronger model, but the same policy augmented with privileged training-only context, such as retrieved skills. 
This makes its token-level guidance inherently asymmetric. 
For a student-sampled token $y_t$, if the privileged teacher assigns a higher probability than the student, the retrieved skill provides an endorsement signal: it supports an on-policy behavior that the student can already generate but has not fully internalized. 
Such positive guidance is particularly suitable for distillation.

In contrast, if the privileged teacher assigns a lower probability to the sampled token, the signal should be interpreted more cautiously. 
A negative gap may indicate that the token should indeed be suppressed, but in skill-conditioned OPSD it may also arise from the instability of privileged context: 
\textbf{(1)~Skill Quality.} Retrieved skills may be irrelevant, incomplete, or redundant. 
\textbf{(2)~Skill Utilization.} The teacher may fail to ground even relevant skills into reliable token-level preferences~\citep{chen2019learningcheating}. 
\textbf{(3)~Multi-turn Drift.} As trajectories unfold, the teacher-student gap tends to widen across turns (Figure~\ref{fig:gaps_analysis}, Middle), amplifying early mismatches over successive decisions~\citep{ross2011dagger}. 
Our preliminary study on Qwen2.5-3B-Instruct shows that negative-gap tokens exceed 50\% of all tokens (Figure~\ref{fig:gaps_analysis}), making this issue pervasive. 
This motivates an asymmetric treatment of privileged guidance: trust positive teacher endorsements more strongly, while applying negative teacher rejections more conservatively.

\begin{figure}[t]
\centering
\includegraphics[width=\columnwidth]{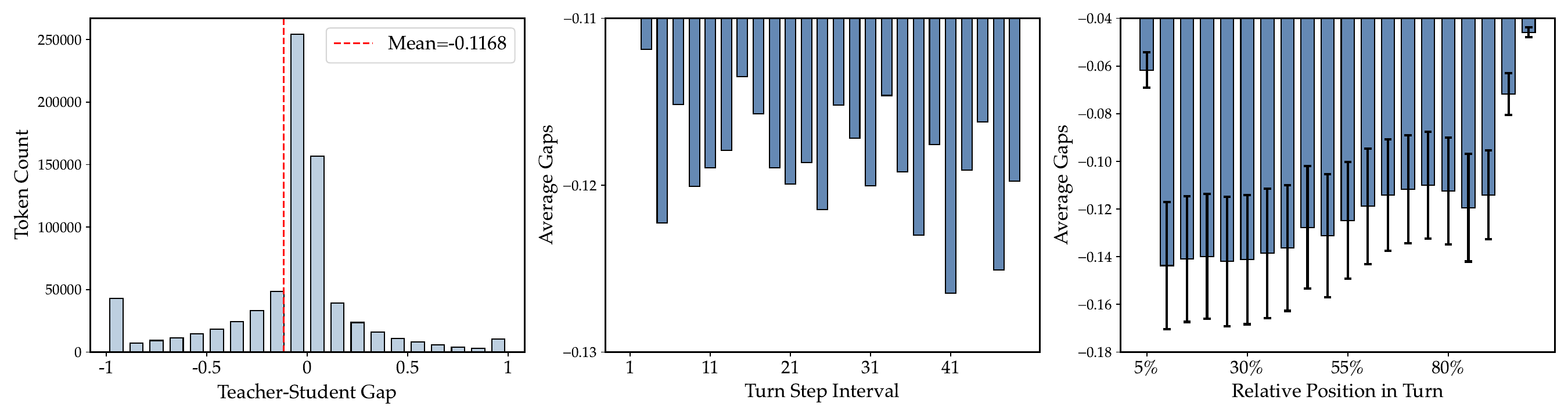}
\caption{\textbf{Teacher-Student Gap Analysis.} \textbf{Left}: Token count distribution partitioned by Teacher-Student gap value. \textbf{Middle}: Average teacher-student gap indexed by multi-turn step. \textbf{Right}: Average teacher-student gap indexed by relative position within a single turn.}
\label{fig:gaps_analysis}
\end{figure}

A stark realization emerges: for multi-turn agents, RL could reign as the primary optimization backbone, while OPSD is relegated to a carefully controlled auxiliary role.

But how should this auxiliary role be controlled? RLSD~\citep{yang2026rlsd} directly uses self-divergence to re-weight token-level RL advantages, but can substantially amplify updates 
especially early in training when teacher-student mismatch is large (see Figure~\ref{fig:instability}, Right).

We take a different path: the OPSD loss is treated as a direct, auxiliary optimization objective, leaving the verifier-driven RL policy loss untouched and thereby strictly preserving the semantics and unbiasedness of the RL advantage. To overcome instability of multi-turn OPSD and privileged guidance, distillation is not performed uniformly on every token. Instead, tokens are selectively distilled via an adaptive, smooth gating mechanism rather than a hand-crafted, rigid schedule (such as Skill-SD~\citep{wang2026skillsd} and HDPO~\citep{ding2026hdpo}). Inspired by TIP~\citep{xu2026tip}, we use token-level signals (such as student entropy or teacher-student divergence) to control the gate's activation. The core philosophy is simple: \emph{let each token decide the intensity of its own supervision.}
This yields a dynamic, self-paced curriculum operating at the finest possible granularity: the individual token level.

We validated our method across the Qwen2.5 and Qwen3 model families on three diverse benchmarks for llm-based agents: ALFWorld~\citep{shridhar2020alfworld}, WebShop~\citep{yao2022webshop}, and Search-QA~\citep{jin2025searchr1}. \methodname{} achieves substantial improvements over GRPO ($+9.4\%$ on ALFWorld, $+7.0\%$ on Search-QA, and $+10.2\%$ on WebShop-Acc for 7B), entirely avoids the catastrophic instability of na\"ive GRPO+OPSD, and consistently outperforms RL--OPSD hybrid methods such as Skill-SD and RLSD across all three model scales (Qwen3-1.7B included). Furthermore, robustness analysis shows that \methodname{} degrades gracefully with retrieval quality: even random retrieval outperforms the GRPO baseline, as our gating design filters out noise from low-quality skills and distills beneficial signals only. 


\section{Method}

\subsection{Problem Setup}

We consider a multi-turn agent that interacts with an environment over a finite horizon.
Given an initial prompt or task description $x$, at turn $k$ the agent receives an observation $o_k$,
generates a response $a_k$, and the environment returns the next observation $o_{k+1}$.
Each response $a_k$ may contain both intermediate reasoning tokens and executable action tokens.
For notational simplicity, we flatten all valid response tokens in one trajectory into a single token sequence
\[
y = (y_1,\dots,y_T) \sim \pi_{\theta}(\cdot \mid x),
\]
where $\pi_{\theta}$ denotes the student policy and $T$ is the total number of valid response tokens.
\begin{figure}[t]
\centering
\includegraphics[width=\columnwidth]{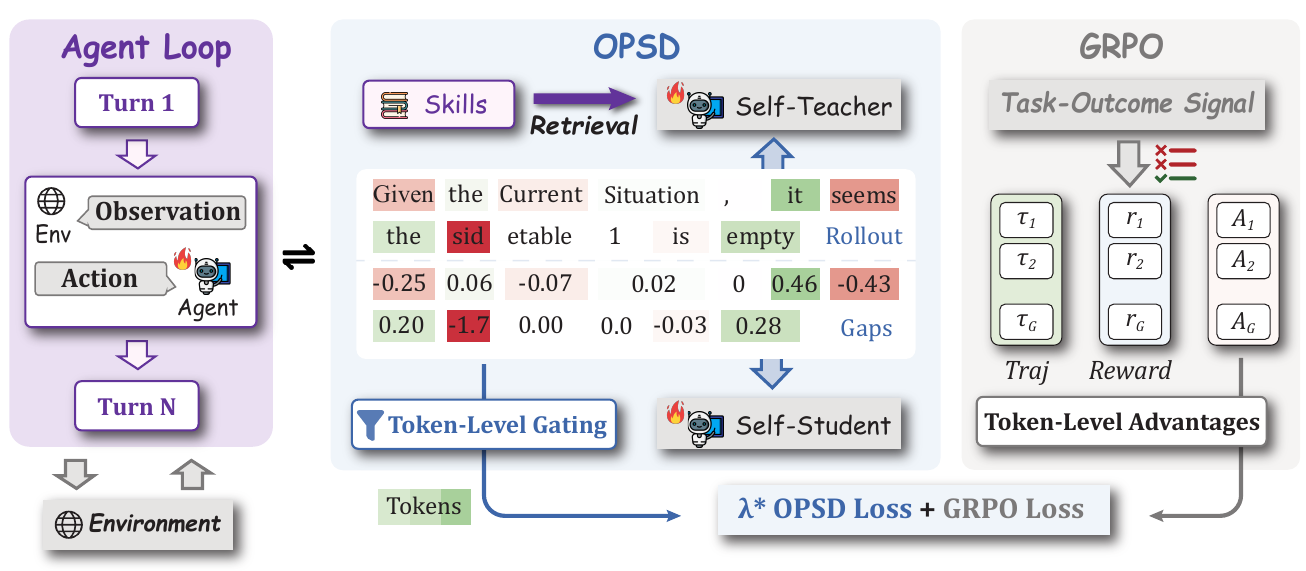}
\caption{\textbf{Illustrations of \methodname{} framework,} which trains multi-turn agents using token-level OPSD loss and verifier-driven RL loss.} 
\label{fig:method}
\end{figure}
At token position $t$, we denote the self-student context by
\[
s_t = (x, y_{<t}),
\]
and the self-teacher context by
\[
s_t^{+} = (x, c^{+}, y_{<t}),
\]
where $c^{+}$ denotes privileged training-only context available only to the teacher branch,
such as reference answers, skills (ours), or other auxiliary information not accessible at test time.
\paragraph{Skills Retrieval} We retrieve task-relevant \emph{skills}---compact, structured demonstrations that encode domain-specific knowledge such as sub-goal decompositions or action templates. 
We implement four retrieval strategies of varying quality to evaluate the robustness of our framework to the fidelity of the retrieved context: \textbf{(1) UCB Retrieval}, \textbf{(2) Keyword Matching (KM)}, \textbf{(3) Full Retrieval}, and \textbf{(4) Random Retrieval}.

Skill retrieval is cast as a multi-armed bandit problem over the skill library $\mathcal{E} = \{e_1, \dots, e_M\}$. For each incoming task, \textbf{UCB Retrieval} selects the single highest-scoring skill file according to the Upper Confidence Bound (UCB) criterion:
\begin{equation}
    \mathrm{score}(e) \;=\; \bar{r}(e) \;+\; c\,\sqrt{\frac{\ln N_{\mathrm{ucb}}}{n(e)}},
\end{equation}
where $\bar{r}(e)$ is the running mean reward obtained when skill $e$ was previously supplied as context, $N_{\mathrm{ucb}}$ is the total number of retrieval queries issued for the same task type, $n(e)$ is the number of times $e$ has been selected, and $c$ controls the exploration--exploitation trade-off.
\textbf{Keyword Matching} bypasses the bandit formulation and instead identifies the task
  scenario by matching keywords in the task description against predefined category labels,
  directly retrieving the skill file associated with the matched category.

\subsection{Optimization Goals}
\label{sec:opsd}
Our method is designed as an auxiliary objective on top of a standard policy optimization GRPO loss. The overall training objective is
\[
\mathcal{L}(\theta)
=
\mathcal{L}_{\text{GRPO}}(\theta)
+
\lambda_{\text{\methodname}} \cdot \mathcal{L}_{\text{\methodname}}(\theta),
\]
where $\mathcal{L}_{\text{GRPO}}$ is the original policy loss and
$\mathcal{L}_{\text{\methodname}}$ is our on-policy self-distillation objective.

Let $m_t \in \{0,1\}$ be the response mask indicating whether token $t$ is valid.
We define masked token averaging as
\[
\operatorname{Agg}(z_{1:T})
=
\frac{\sum_{t=1}^{T} m_t z_t}{\sum_{t=1}^{T} m_t}.
\]


\paragraph{RL Optimization} For each input $x$, GRPO samples a group of responses
\[
\{y^{(i)}\}_{i=1}^{G} \sim \pi_{\theta}(\cdot \mid x),
\]
and computes a sequence-level advantage $A^{(i)}$ from environment rewards.
Using a reference policy $\pi_{\mathrm{ref}}$, the GRPO objective can be written as
\begin{align}
\mathcal{L}_{\text{GRPO}}(\theta)
&=
-\frac{1}{G}\sum_{i=1}^{G}
\operatorname{Agg}\!\left(
\min\!\Big(
r_{t}^{(i)} A^{(i)},\,
\operatorname{clip}(r_{t}^{(i)},1-\epsilon,1+\epsilon)A^{(i)}
\Big)
\right)
\notag\\
&\quad
+\,\beta \cdot
\frac{1}{G}\sum_{i=1}^{G}
\operatorname{Agg}\!\left(
D_{\mathrm{KL}}\!\big(
\pi_{\theta}(\cdot \mid s_t^{(i)})
\,\|\,
\pi_{\mathrm{ref}}(\cdot \mid s_t^{(i)})
\big)
\right),
\end{align}
where $r_t^{(i)}=\pi_{\theta}(y_t^{(i)} \mid s_t^{(i)}) / \pi_{\theta_{\mathrm{old}}}(y_t^{(i)} \mid s_t^{(i)})$ is the importance sampling ratio.




\paragraph{OPSD Optimization}

At a fixed token position $t$, the teacher and student induce conditional token distributions $\pi_T(\cdot \mid s_t^{+})$ and $\pi_{\theta}(\cdot \mid s_t)$, respectively. The per-token reverse KL divergence is defined as:
$$
D_{\mathrm{RKL}}^{(t)} = D_{\mathrm{KL}}\!\left( \pi_{\theta}(\cdot \mid s_t) \;\middle\|\; \pi_T(\cdot \mid s_t^{+}) \right) = \sum_{v \in \mathcal{V}} \pi_{\theta}(v \mid s_t) \log \frac{\pi_{\theta}(v \mid s_t)}{\pi_T(v \mid s_t^{+})}.
$$
To efficiently derive an \emph{importance signal} without computing the expensive full-vocabulary summation, we take a single-sample estimate on the student-sampled token $y_t \sim \pi_{\theta}(\cdot \mid s_t)$. The negation of this estimate directly yields the Teacher-Student log-probability gap $\Delta_t$:
$$
\Delta_t = -\widehat{D}_{\mathrm{RKL}}^{(t)} = \log \pi_T(y_t \mid s_t^{+}) - \log \pi_{\theta}(y_t \mid s_t).
$$

\subsection{Token-Level Gating}
\label{sec:token_gating}

The key idea is to convert privileged teacher guidance into a token-level trust weight, 
while keeping the verifier-driven RL objective unchanged. 
We introduce a token-level gate $g_t\in[0,1]$ that modulates the OPSD signal on each student-sampled token,
and apply it to a sampled-token surrogate so that different gating strategies share the same optimization.

Let
\[
\Delta_t
\;=\;
\operatorname{sg}\!\Bigl(
  \log\pi_{\theta}^{+}(y_t\mid s_t^{+})
  \;-\;
  \log\pi_{\theta}(y_t\mid s_t)
\Bigr)
\]
denote the \emph{detached} Teacher-Student log-probability gap on the student-sampled token, and
\[
h_t = -\sum_{v\in\mathcal{V}} \pi_{\theta}(v\mid s_t)\,\log\pi_{\theta}(v\mid s_t)
\]
denote the student entropy at position~$t$.
We compose each raw score with the logistic sigmoid~$\sigma$
so that every gate is smooth, differentiable, and naturally bounded in~$(0,1)$.
The sharpness parameter $\beta>0$ controls the transition between conservative attenuation and strong activation.

We instantiate three complementary gating strategies:
\begin{enumerate}
  \item \emph{Entropy gating}: $g_t = \sigma(\beta\,h_t)$,
    targeting high-entropy positions where the student is most uncertain.
  \item \emph{Gap gating}: $g_t = \sigma(\beta\,\Delta_t)$,
    assigning larger weights to positive-gap tokens endorsed by the privileged teacher while attenuating negative-gap tokens.
  \item \emph{Soft-OR gating}: $g_t = \sigma\!\bigl(\beta\bigl[1 - (1-h_t)(1-\Delta_t)\bigr]\bigr)$,
    combining student uncertainty and teacher-student gap as an alternative gating strategy.
\end{enumerate}

In all cases, the gate is detached via $\operatorname{sg}(\cdot)$,
so gradients flow exclusively through the student log-probability.
The token-level loss is
\[
\ell_t^{\,\methodname}
\;=\;
g_t \cdot
\bigl(
  \log\pi_{\theta}^{+}(y_t\mid s_t^{+})
  -
  \log\pi_{\theta}(y_t\mid s_t)
\bigr),
\qquad
\mathcal{L}_{\methodname}
\;=\;
\operatorname{Agg}\!\bigl(\ell_t^{\,\methodname}\bigr).
\]
With gap gating, the sigmoid gate implements asymmetric token-level modulation: positive-gap tokens receive stronger auxiliary distillation, while negative-gap tokens are softly attenuated.
We also provide theoretical analysis of our design in Appendix~\ref{appendix:proof}.

\begin{table*}[t]
    \centering
    \caption{
        \textbf{Performance on ALFWorld, Search-QA and WebShop tasks.}
        We report the success rate (\%) on ALFWorld, accuracy (\%) on Search-QA, and Score/Acc (\%) on WebShop (128 tasks). * means validation with skills.
        \sethlcolor{topcolor}\hl{\textbf{Best}} and \sethlcolor{secondcolor}\hl{\mbox{\underline{second-best}}} are highlighted. 
    }
    \label{tab:main_results}
    \resizebox{1\textwidth}{!}{%
    \begin{tabular}{l CCCCCCC CCCCCCCC CC }
    \toprule
    & \multicolumn{7}{c}{\textbf{ALFWorld}} & \multicolumn{8}{c}{\textbf{Search-QA}} & \multicolumn{2}{c}{\textbf{WebShop}} \\
    \cmidrule(lr){2-8} \cmidrule(lr){9-16} \cmidrule(lr){17-18}
    \textbf{Method}
    & \textbf{Pick} & \textbf{Look} & \textbf{Clean} & \textbf{Heat} & \textbf{Cool} & \textbf{Pick2} & \textbf{Avg}
    & \textbf{NQ} & \textbf{Triv} & \textbf{Pop} & \textbf{Hotp} & \textbf{2Wk} & \textbf{MuS} & \textbf{Bam} & \textbf{Avg}
    & \textbf{Score} & \textbf{Acc} \\
    \midrule

    \rowcolor{gray!10} \multicolumn{18}{l}{\textit{Qwen2.5-3B-Instruct}} \\

    Vanilla
        & 44.4 & 11.1 & 6.2 & 15.4 & 28.6 & 12.5 & 21.9
        & 24.6 & 48.1 & 31.0 & 26.3 & 25.3 & 7.2 & 59.7 & 31.7
        & 6.7 & 0.8
        \\
    Skill-Prompt*
        & 51.7 & 66.7 & 48.4 & 0.0 & 4.3 & 10.0 & 28.9
        & 23.7 & 46.2 & 30.6 & 24.4 & 22.1 & 7.5 & 12.5 & 23.9
        & 0.2 & 0.8
        \\
    OPSD
        & 48.8 & 41.7 & 16.7 & 0.0 & 15.8 & 16.7 & 28.1
        & 0.1 & 0.1 & 0.1 & 0.0 & 0.0 & 0.0 & 0.0 & 0.0
        & 11.3 & 3.1
        \\

    GRPO
        & 91.2 & 62.5 & \cellcolor{secondcolor}\underline{96.2} & 61.9 & 65.0 & 47.4 & 75.0
        & 39.3 & \cellcolor{secondcolor}\underline{60.6} & 41.1 & 37.4 & 34.6 & 15.4 & 26.4 & 36.4
        & 79.8 & 63.3
        \\
    Skill-GRPO
        & 88.9 & 71.4 & 58.8 & \cellcolor{secondcolor}\underline{70.6} & 40.7 & 29.2 & 60.2
        & 43.5 & 58.8 & 43.0 & 36.8 & 32.2 & 11.7 & 12.5 & 34.1
        & 77.3 & 60.9
        \\
    Skill-GRPO*
        & 94.3 & 57.1 & \cellcolor{topcolor}\textbf{100} & 66.7 & \cellcolor{secondcolor}\underline{73.1} & 57.1 & 80.5
        & 44.3 & 59.6 & \cellcolor{secondcolor}\underline{44.3} & 39.0 & 36.1 & 14.5 & 14.9 & 36.1
        & 76.3 & \cellcolor{secondcolor}\underline{66.4}
        \\
    GRPO+OPSD
        & \cellcolor{topcolor}\textbf{100} & \cellcolor{topcolor}\textbf{82.4} & 85.7 & \cellcolor{topcolor}\textbf{75.0} & 70.0 & 60.0 & \cellcolor{secondcolor}\underline{81.2}
        & \cellcolor{topcolor}\textbf{44.9} & \cellcolor{topcolor}\textbf{61.2} & \cellcolor{topcolor}\textbf{45.2} & \cellcolor{topcolor}\textbf{40.4} & 38.5 & \cellcolor{secondcolor}\underline{16.0} & \cellcolor{secondcolor}\underline{66.1} & \cellcolor{topcolor}\textbf{44.6}
        & 77.8 & \cellcolor{secondcolor}\underline{66.4}
        \\
    Skill-SD
        & 88.2 & 50.0 & \cellcolor{secondcolor}\underline{96.2} & 52.4 & 65.0 & 57.9 & 73.4
        & 44.4 & 60.4 & 44.0 & \cellcolor{secondcolor}\underline{39.5} & \cellcolor{topcolor}\textbf{40.4} & 15.4 & 64.9 & \cellcolor{secondcolor}\underline{44.1}
        & 75.9 & 64.0
        \\
    RLSD
        & 87.9 & \cellcolor{secondcolor}\underline{75.0} & 90.9 & \cellcolor{topcolor}\textbf{75.0} & \cellcolor{secondcolor}\underline{73.1} & \cellcolor{secondcolor}\underline{68.4} & 79.7
        & 41.5 & 58.6 & 42.3 & \cellcolor{topcolor}\textbf{40.4} & \cellcolor{secondcolor}\underline{40.2} & \cellcolor{topcolor}\textbf{16.8} & \cellcolor{topcolor}\textbf{66.9} & 43.8
        & \cellcolor{secondcolor}\underline{84.4} & \cellcolor{secondcolor}\underline{66.4}
        \\
    \textbf{\methodname{}}
        & \cellcolor{secondcolor}\underline{97.1} & 62.5 & \cellcolor{topcolor}\textbf{100} & 61.9 & \cellcolor{topcolor}\textbf{75.0} & \cellcolor{topcolor}\textbf{84.2} & \cellcolor{topcolor}\textbf{84.4}
        & \cellcolor{secondcolor}\underline{44.8} & 58.1 & \cellcolor{secondcolor}\underline{44.3} & 38.6 & 36.2 & 15.7 & \cellcolor{secondcolor}\underline{66.1} & 43.4
        & \cellcolor{topcolor}\textbf{85.0} & \cellcolor{topcolor}\textbf{68.0}
        \\

    \midrule

    \rowcolor{gray!10} \multicolumn{18}{l}{\textit{Qwen2.5-7B-Instruct}} \\

    Vanilla
        & 36.1 & 22.2 & 3.1 & 0.0 & 0.0 & 0.0 & 12.5
        & 25.2 & 50.8 & 29.5 & 29.0 & 29.0 & 10.4 & 63.7 & 33.9
        & 5.9 & 1.6
        \\
    Skill-Prompt*
        & 51.7 & 50.0 & 32.3 & 5.3 & 4.3 & 0.0 & 23.4
        & 30.9 & 52.1 & 32.7 & 32.7 & 27.9 & 12.7 & 66.1 & 36.4
        & 1.7 & 0.8
        \\
    OPSD
        & 50.0 & 60.0 & 22.7 & 21.4 & 17.6 & 9.5 & 32.8
        & 8.8 & 8.6 & 17.5 & 2.5 & 4.2 & 0.5 & 1.2 & 6.2
        & 4.5 & 2.3
        \\

    GRPO
        & 91.2 & \cellcolor{secondcolor}\underline{87.5} & 96.2 & 81.0 & 65.0 & 57.9 & 81.2
        & 45.1 & \cellcolor{secondcolor}\underline{63.7} & 44.0 & 43.6 & 43.2 & 16.8 & 37.6 & 42.0
        & 80.9 & 72.6
        \\
    Skill-GRPO
        & 88.5 & 66.7 & 65.2 & 61.1 & 57.7 & \cellcolor{secondcolor}\underline{73.1} & 69.5
        & 45.2 & \cellcolor{secondcolor}\underline{63.7} & 45.7 & 43.1 & 43.3 & 19.6 & 21.4 & 40.3
        & 80.4 & 71.9
        \\
    Skill-GRPO*
        & \cellcolor{topcolor}\textbf{100} & 83.3 & \cellcolor{secondcolor}\underline{96.4} & 83.3 & 75.0 & \cellcolor{topcolor}\textbf{78.9} & \cellcolor{topcolor}\textbf{88.3}
        & 44.8 & 63.0 & 45.1 & 43.7 & 43.7 & \cellcolor{secondcolor}\underline{20.5} & 71.4 & 47.5
        & 87.0 & \cellcolor{secondcolor}\underline{81.2}
        \\
    GRPO+OPSD
        & 91.4 & 61.5 & \cellcolor{topcolor}\textbf{100} & \cellcolor{secondcolor}\underline{87.5} & \cellcolor{secondcolor}\underline{76.5} & 52.2 & 80.4
        & \cellcolor{topcolor}\textbf{47.3} & \cellcolor{topcolor}\textbf{64.5} & 46.9 & 43.8 & 39.3 & 18.0 & 69.4 & 47.0
        & 86.8 & 76.5
        \\
    Skill-SD
        & 93.9 & \cellcolor{topcolor}\textbf{93.8} & 90.9 & \cellcolor{topcolor}\textbf{100} & 69.2 & 68.4 & 85.1
        & \cellcolor{secondcolor}\underline{47.1} & \cellcolor{topcolor}\textbf{64.5} & \cellcolor{secondcolor}\underline{47.8} & \cellcolor{secondcolor}\underline{44.2} & 42.1 & 20.2 & 69.0 & 47.8
        & 86.1 & 76.5
        \\
    RLSD
        & \cellcolor{topcolor}\textbf{100} & \cellcolor{secondcolor}\underline{87.5} & 92.3 & 58.8 & \cellcolor{topcolor}\textbf{80.0} & 65.2 & 82.0
        & 46.8 & 63.0 & 44.4 & \cellcolor{topcolor}\textbf{45.5} & \cellcolor{topcolor}\textbf{48.9} & \cellcolor{topcolor}\textbf{21.5} & \cellcolor{secondcolor}\underline{73.0} & \cellcolor{secondcolor}\underline{49.0}
        & \cellcolor{secondcolor}\underline{87.4} & 77.3
        \\
    \textbf{\methodname{}}
        & \cellcolor{secondcolor}\underline{94.7} & 75.0 & \cellcolor{topcolor}\textbf{100} & 86.7 & 68.2 & \cellcolor{topcolor}\textbf{78.9} & \cellcolor{secondcolor}\underline{85.9}
        & 46.3 & 63.5 & \cellcolor{topcolor}\textbf{48.2} & 43.8 & \cellcolor{secondcolor}\underline{48.4} & 19.6 & \cellcolor{topcolor}\textbf{73.0} & \cellcolor{topcolor}\textbf{49.0}
        & \cellcolor{topcolor}\textbf{89.4} & \cellcolor{topcolor}\textbf{82.8}
        \\

    \midrule

    \rowcolor{gray!10} \multicolumn{18}{l}{\textit{Qwen3-1.7B-Instruct}} \\

    Vanilla
        & 25.0 & 22.2 & 3.1 & 0.0 & 21.4 & 4.2 & 12.5
        & 29.4 & 46.9 & 37.0 & 23.5 & 19.6 & 6.4 & 10.5 & 24.8
        & 46.5 & 4.7
        \\
    Skill-Prompt*
        & 10.3 & \cellcolor{secondcolor}\underline{50.0} & 16.1 & 0.0 & 0.0 & 5.0 & 9.4
        & 29.4 & 46.5 & 36.2 & 22.9 & 20.8 & 4.3 & 10.1 & 24.3
        & 23.0 & 2.3
        \\
    OPSD
        & 26.3 & 33.3 & 9.1 & 0.0 & 4.5 & 5.3 & 14.1
        & 4.2 & 8.3 & 4.6 & 6.6 & 15.3 & 0.7 & 1.2 & 5.8
        & 47.4 & 9.3
        \\

    GRPO
        & \cellcolor{secondcolor}\underline{71.1} & 41.7 & 36.4 & \cellcolor{secondcolor}\underline{40.0} & 31.8 & \cellcolor{secondcolor}\underline{31.6} & 46.1
        & \cellcolor{secondcolor}\underline{40.0} & \cellcolor{topcolor}\textbf{58.9} & 43.5 & 35.4 & 30.3 & 12.0 & 65.7 & 40.8
        & 67.3 & 38.3
        \\
    Skill-GRPO
        & 27.6 & \cellcolor{topcolor}\textbf{54.5} & 22.7 & 27.3 & 0.0 & 19.2 & 21.1
        & 39.2 & \cellcolor{secondcolor}\underline{58.6} & 43.9 & 35.2 & 28.2 & 11.5 & \cellcolor{secondcolor}\underline{66.1} & 40.4
        & 73.4 & 46.1
        \\
    Skill-GRPO*
        & 31.4 & 42.9 & 51.9 & 8.3 & 11.5 & 7.1 & 28.1
        & 38.0 & 58.4 & 43.9 & \cellcolor{secondcolor}\underline{36.3} & 29.0 & 12.5 & \cellcolor{topcolor}\textbf{66.9} & 40.7
        & \cellcolor{secondcolor}\underline{80.4} & 50.0
        \\
    GRPO+OPSD
        & 38.2 & \cellcolor{secondcolor}\underline{50.0} & 30.8 & 28.6 & 30.0 & 21.1 & 32.0
        & \cellcolor{topcolor}\textbf{40.7} & \cellcolor{topcolor}\textbf{58.9} & 45.0 & \cellcolor{topcolor}\textbf{37.0} & \cellcolor{secondcolor}\underline{34.6} & \cellcolor{topcolor}\textbf{13.3} & 65.7 & \cellcolor{topcolor}\textbf{42.2}
        & 70.7 & 38.3
        \\
    Skill-SD
        & 52.9 & 37.5 & \cellcolor{secondcolor}\underline{69.2} & \cellcolor{topcolor}\textbf{42.9} & \cellcolor{topcolor}\textbf{60.0} & \cellcolor{topcolor}\textbf{36.8} & \cellcolor{secondcolor}\underline{52.3}
        & 39.1 & 57.5 & \cellcolor{topcolor}\textbf{45.4} & 34.8 & 34.1 & 10.7 & 64.1 & 40.8
        & \cellcolor{topcolor}\textbf{81.8} & \cellcolor{secondcolor}\underline{53.9}
        \\
    RLSD
        & 50.0 & 37.5 & 61.5 & 19.0 & \cellcolor{secondcolor}\underline{50.0} & 21.1 & 42.2
        & 38.6 & 57.3 & 43.0 & 34.5 & 34.1 & 11.5 & 65.3 & 40.6
        & 74.0 & 50.8
        \\
    \textbf{\methodname{}}
        & \cellcolor{topcolor}\textbf{73.5} & 25.0 & \cellcolor{topcolor}\textbf{76.9} & 33.3 & 40.0 & \cellcolor{topcolor}\textbf{36.8} & \cellcolor{topcolor}\textbf{53.9}
        & 39.7 & \cellcolor{topcolor}\textbf{58.9} & \cellcolor{secondcolor}\underline{45.3} & 35.9 & \cellcolor{topcolor}\textbf{35.5} & \cellcolor{secondcolor}\underline{12.6} & 65.3 & \cellcolor{secondcolor}\underline{41.9}
        & 76.8 & \cellcolor{topcolor}\textbf{58.6}
        \\

    \bottomrule
    \end{tabular}
    }
    \end{table*}

\section{Experiment}
\paragraph{Benchmarks}

We evaluate our methods on ALFWorld~\citep{shridhar2020alfworld}, Search-based QA~\citep{jin2025searchr1}, and Webshop~\citep{yao2022webshop}. \textit{ALFWorld} is a text-based game aligned with the ALFRED embodied AI benchmark, including 3,827 task instances across six categories of common household activities: Pick and Place (Pick), Look at Obj in Light (Look), Pick Clean then Place in Recep (Clean), Pick Heat then Place in Recep (Heat), Pick Cool then Place in Recep (Cool), and Pick Two Obj and Place (Pick2). \textit{Search-based QA} contains several widely-used search-augmented QA benchmarks, including single-hop QA datasets (NQ~\citep{kwiatkowski2019nq}, TriviaQA~\citep{joshi2017triviaqa}, and PopQA~\citep{mallen2023popqa}) and multi-hop QA datasets (HotpotQA~\citep{yang2018hotpotqa}, 2Wiki~\citep{ho20202wiki}, MuSiQue~\citep{trivedi2022musique}, and Bamboogle~\citep{press2023bamboogle}).
\textit{WebShop} is a complex, web-based interactive environment designed to test the LLM agents in realistic online shopping scenarios. Agents navigate a realistic web interface to find and purchase products matching user specifications. We select 128 fixed tasks in validation set, which aligns with \citet{feng2025gigpo}.

\paragraph{Implementation Details.}
We train the Qwen2.5-Instruct and Qwen3-Instruct series using \methodname{} for at 150 steps on 8 H800 GPUs.
For ALFWorld, we adopt the training data split from GiGPO~\citep{feng2025gigpo}, 
with each batch sampling 16 tasks and 8 rollouts per prompt, 
and a maximum prompt length of 2,048 tokens.
For Search-QA, we follow the experimental setup of Search-R1~\citep{jin2025searchr1}, 
using E5~\citep{wang2022e5} as the retriever.
The training data are drawn from NQ and HotpotQA, making these two benchmarks in-domain, 
while the remaining datasets serve as out-of-domain evaluation.
Each batch samples 128 tasks with a maximum prompt length of 4,096 tokens. For Webshop, 1000 tasks are selected for training, with each batch sampling 16 tasks and 8 rollouts per prompt, and a maximum prompt length of 4,096 tokens. 
We set the \texttt{SkillBank} from SkillRL~\citep{xia2026skillrl} for all three environments. We set $\lambda_{\methodname}=0.01$ and $\beta=5.0$ in our experiments.

\paragraph{Baselines}
We compare \methodname{} against three categories of methods on three base models.
  \textbf{(1) Training-free methods.}
  \textit{Skill-Prompt} retrieves task-relevant skills from the \texttt{SkillBank} via keyword
  matching (KM) and prepends them to the input prompt at inference time.
  \textbf{(2) Post-training methods,} such as GRPO~\citep{shao2024deepseekmath}, OPSD~\citep{zhao2026opsd} and Skill-GRPO. \textit{Skill-GRPO} augments GRPO by retrieving skills via KM and injecting them into the
  training prompt;
  at test time it can run with (\textit{Skill-GRPO*}) or without retrieved skills.
  \textbf{(3) Hybrid methods}, that combine RL with privileged knowledge distillation, such as GRPO+OPSD, and Skill-SD~\citep{wang2026skillsd}, RLSD~\citep{yang2026rlsd}.  
  \textit{GRPO+OPSD} simply adds the OPSD distillation loss as an auxiliary objective on top of GRPO training. All the algorithms of \methodname{} and other baselines are detailed in Appendix~\ref{appendix:proof}.
\subsection{Main Results}
\paragraph{Overall Performance.}          
As summarized in Table~\ref{tab:main_results}, \methodname{} demonstrates exceptional performance, achieving the best or second-best results across almost all settings. Compared to GRPO, it delivers substantial gains: on Qwen2.5-3B, it improves ALFWorld by +9.4\% (84.4 vs.\ 75.0), Search-QA by +7.0\%, and WebShop-Acc by +4.7\%, with similarly consistent improvements on the 7B model. While standalone OPSD collapses catastrophically (near-zero on Search-QA) and a naive GRPO+OPSD combination degrades severely on Qwen3-1.7B (32.0 vs.\ 46.1) due to unbounded distillation gradients overwhelming the RL signal, \methodname{} avoids the observed instability and maintains stable gains. Through its adaptive gating mechanism, it ensures stable optimization and consistent gains across all model scales.
 
\paragraph{Skills Internalization.}                                                           
Beyond overall performance, \methodname{} successfully \emph{internalizes} privileged knowledge rather than superficially relying on it at inference~\citep{lu2026skill0}. While Skill-GRPO shows a massive performance drop when tested without skills (e.g., 60.2 vs.\ 80.5 on ALFWorld-3B) and even underperforms vanilla GRPO due to harmful distributional dependencies, \methodname{} requires no external skills during inference. Yet, it surpasses even the skill-augmented Skill-GRPO* in most settings, achieving 84.4 on ALFWorld-3B and a striking 53.9 (vs.\ 28.1) on ALFWorld-1.7B. These consistent gains confirm that our token-level gated distillation genuinely transfers underlying knowledge into the policy's parameters.

\paragraph{Strong Generalization.}                                                            
\methodname{} also exhibits stronger generalization compared to hybrid baselines such as Skill-SD and RLSD.
On Qwen2.5-3B, it outperforms both methods on ALFWorld (84.4 vs.\ 73.4 for Skill-SD and 79.7 for RLSD) and WebShop.
This advantage is most pronounced on the challenging Qwen3-1.7B model, where smaller models may struggle to utilize retrieved skills effectively.
In this regime, Skill-GRPO drops to 21.1\% on ALFWorld, well below GRPO's 46.1\%, and RLSD reaches 42.2\%.
In contrast, \methodname{} achieves the highest score of 53.9\%.
By attenuating uncertain negative teacher guidance while preserving positive teacher endorsements, our gating mechanism provides a more robust way to incorporate privileged knowledge without sacrificing generalization.

\subsection{Training Dynamics}                                                          
To elucidate the adaptive behavior of \methodname{} throughout RL optimization, we monitor two key metrics for the Qwen2.5-7B backbone on ALFWorld in Figure~\ref{fig:7b_alfworld_gap_gate}. 
\textit{(a)} shows that the mean Teacher-Student log-probability gap 
($\bar\Delta = \mathbb{E}_t[\Delta_t]$) remains consistently negative, 
indicating that the privileged teacher assigns lower probability than the student to sampled tokens on average.
This reveals partial asymmetric trust in privileged guidance regime where na\"ive distillation would actively degrade performance. Crucially, $\bar\Delta$ steadily converges toward zero, confirming that the gating mechanism successfully identifies and up-weights the specific subset of tokens where the teacher \emph{does} provide beneficial signals. To further validate this adaptive filtering, \textit{(b)} tracks the gate activation ratio (the fraction of tokens where $g_t > 0.5$). For the majority of early training, this ratio remains strictly below $0.5$, correctly suppressing tokens that carry negative signals. However, as the student's policy evolves, the ratio gradually increases, reflecting that more tokens enter a regime of constructive teacher guidance.

\begin{figure}[t]
\centering
\includegraphics[width=\columnwidth]{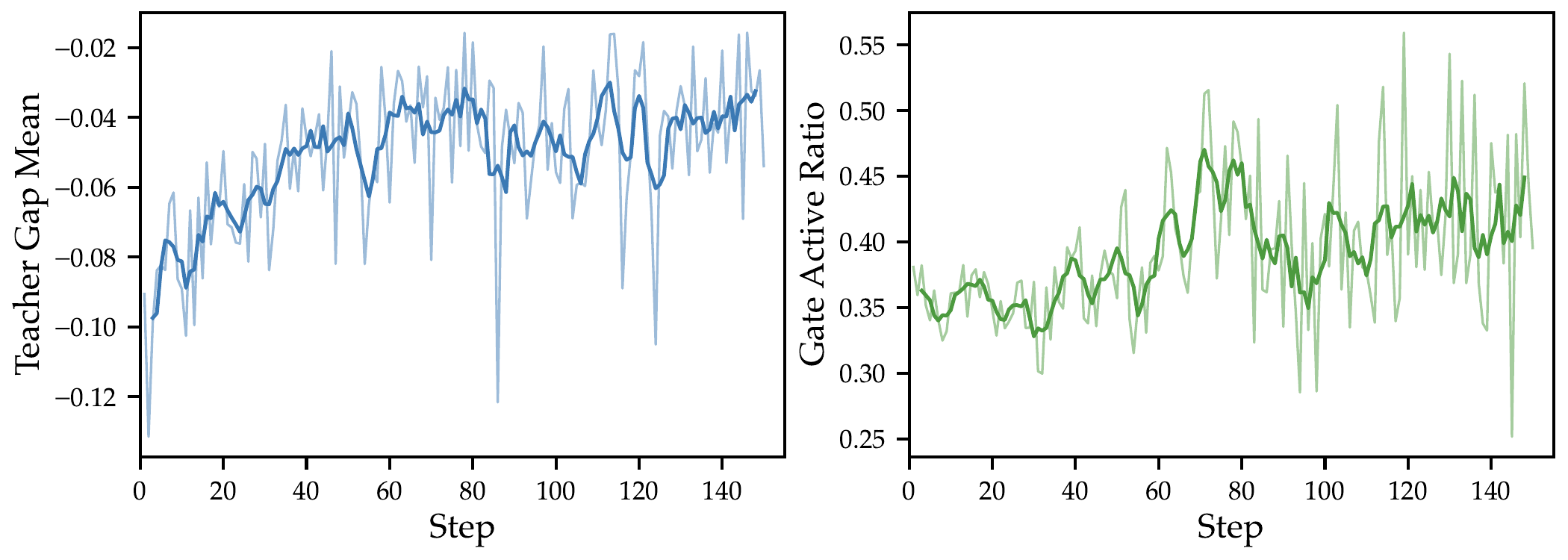}
\caption{\textbf{Training Dynamics.} Average teacher-student gap (Left) and gate activation ratio (Right) during the training of Qwen2.5-7B-Instruct on ALFWorld.} 
\label{fig:7b_alfworld_gap_gate}
\end{figure}

\subsection{Robust Analysis}


To address the practical concern of whether \methodname{} heavily relies on high-quality skill retrieval, we fix our optimal configuration ($\lambda=0.01$, $\beta=5.0$) and evaluate performance across four retrieval quality tiers (Table~\ref{tab:ablation_method}). 
All four strategies consistently outperform the pure GRPO baseline (\emph{w/o OPSD}). 
Even \textbf{Random Retrieval}---which selects skills with zero task awareness---yields gains of $+1.9$/$+1.6$/$+1.0$ on ALFWorld/WebShop-Score/WebShop-Acc. 
Higher-quality retrieval further amplifies these benefits: \textbf{Keyword Matching} achieves gains of $+4.7$/$+8.5$/$+10.2$ and even surpasses UCB on WebShop. 

These results echo our observation on asymmetric privileged guidance. 
Low-quality retrieval can introduce mismatched or unstable teacher signals, especially negative guidance from irrelevant skills. 
Rather than uniformly following such signals, \methodname{} uses token-level gating to retain positive teacher endorsements while softly attenuating uncertain negative rejections. 
Thus, the performance gains remain robust across retrieval qualities, suggesting that the uplift stems primarily from gated distillation rather than retrieval fidelity alone.

\begin{table}[t]       
    \centering
    \caption{Robust Testing of different skill retrieval methods.}                          
    \label{tab:ablation_method}                                                             
    \begin{tabular}{lccc}
    \toprule
        Method & ALFWorld & WebShop-Score & WebShop-Acc \\

        \midrule

        UCB & 86.8\posval{$_{+5.6}$} & 87.5\posval{$_{+6.6}$} & 81.2\posval{$_{+8.6}$} \\   
        KM & 85.9\posval{$_{+4.7}$} & 89.4\posval{$_{+8.5}$} & 82.8\posval{$_{+10.2}$} \\
        Full & 83.2\posval{$_{+2.0}$} & 87.2\posval{$_{+6.3}$} & 78.1\posval{$_{+5.5}$} \\  
        Random & 83.1\posval{$_{+1.9}$} & 82.5\posval{$_{+1.6}$} & 73.6\posval{$_{+1.0}$} \\
        \rowcolor{gray!10} w/o OPSD & 81.2 & 80.9 & 72.6 \\
        \bottomrule
    \end{tabular}
\end{table} 
\subsection{Ablation Studies}

\paragraph{Token-Level Gating Strategy.} As shown in Figure~\ref{fig:ablation_tip}, Teacher-Student Gap gating consistently outperforms both the entropy and soft-OR gating strategies (introduced in Section~\ref{sec:token_gating}), achieving a higher asymptotic success rate (${\sim}0.84$) and a steeper performance climb after the initial 100 steps. We attribute this superiority to the directness of the Teacher-Student gap ($\Delta_t$) as an importance signal, which precisely measures the teacher's disagreement with the student's chosen token. In contrast, entropy ($h_t$) acts as an indirect proxy that may erroneously activate on uncertain but already well-handled tokens, while soft-OR dilutes the gating signal by triggering when only one score is moderately large, thereby reducing its selectivity. All remaining experiments default to gap gating.
\begin{figure}[t]
\centering
\begin{minipage}{0.48\columnwidth}
    \centering
    \includegraphics[width=\linewidth]{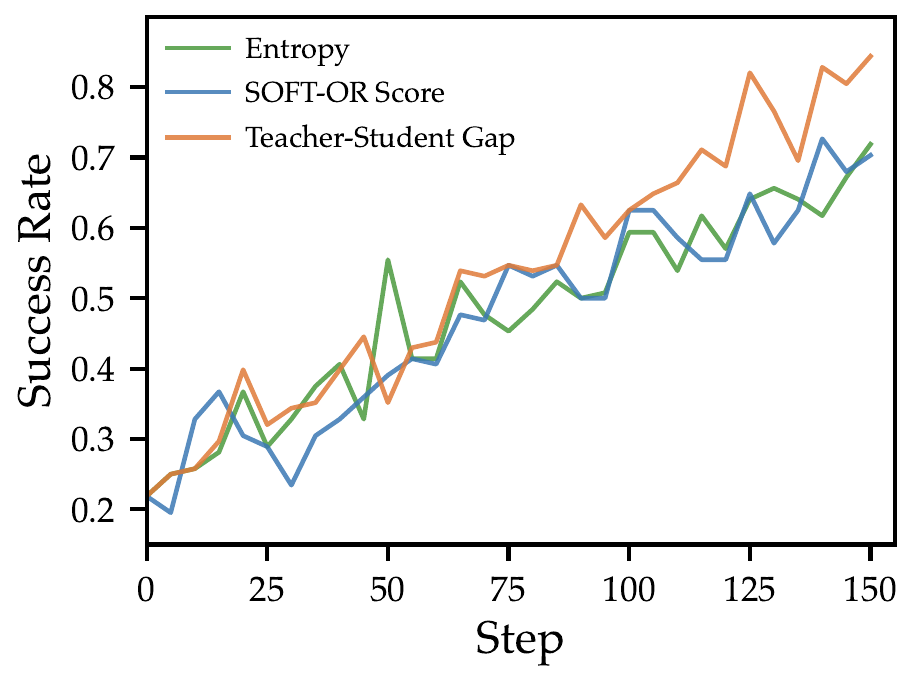}
    \caption{Ablations of Token-level Gating on Qwen2.5-3B-Instruct.}
    \label{fig:ablation_tip}
\end{minipage}
\hfill
\begin{minipage}{0.48\columnwidth}
    \centering
    \includegraphics[width=\linewidth]{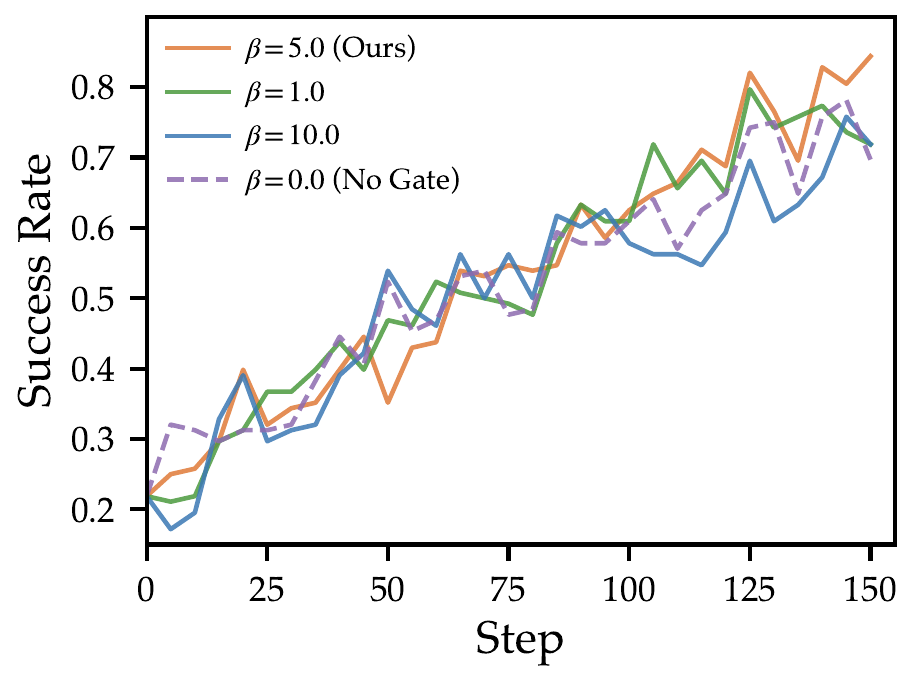}
    \caption{Ablations of $\beta$ on Qwen2.5-3B-Instruct.}
    \label{fig:ablation_beta}
\end{minipage}
\end{figure}
\begin{figure}[t]
\centering
\begin{minipage}{0.48\columnwidth}
    \centering
    \includegraphics[width=\linewidth]{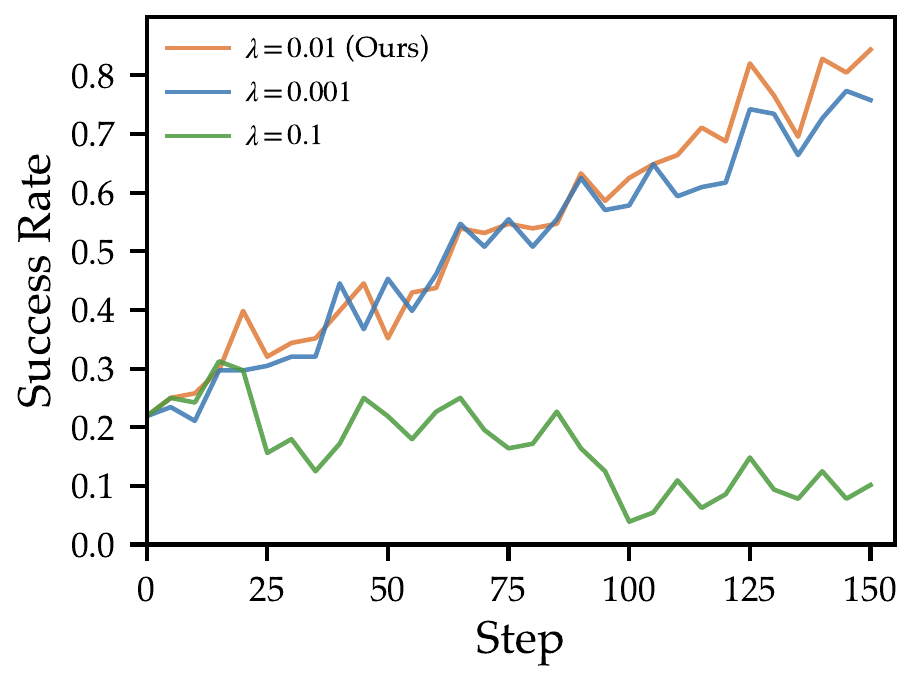}
    \caption{Ablations of $\lambda$ on Qwen2.5-3B-Instruct.}
    \label{fig:ablation_lambda}
\end{minipage}
\hfill
\begin{minipage}{0.48\columnwidth}
    \centering
    \includegraphics[width=\linewidth]{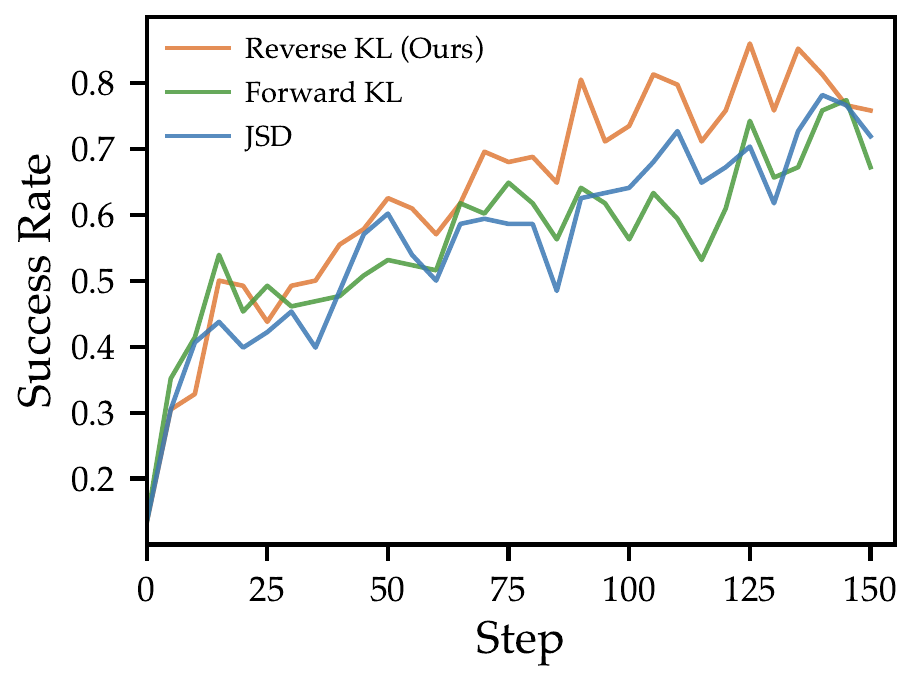}
    \caption{Ablations of $\mathcal{L}_{\methodname}$ type on Qwen2.5-7B-Instruct.}
    \label{fig:ablation_loss}
\end{minipage}
\end{figure}
\paragraph{Sharpness $\beta$.} Figure~\ref{fig:ablation_beta} evaluates the impact of sigmoid sharpness across $\beta \in \{0, 1, 5, 10\}$, where $\beta = 0$ denotes the complete removal of the gating mechanism (i.e., uniform distillation). The optimal performance is achieved at $\beta = 5$, which effectively balances two distinct failure modes: an excessively small $\beta$ (including the no-gate baseline) applies distillation indiscriminately, thereby inheriting the multi-turn instability of na\"ive OPSD; conversely, an overly large $\beta$ strictly binarizes the gate, stripping away the smooth modulation necessary for assigning partial credit on borderline tokens.

\paragraph{Distillation Coefficient $\lambda$.} Figure~\ref{fig:ablation_lambda} sweeps the distillation weight $\lambda_{\methodname} \in \{0.001, 0.01, 0.1\}$, revealing that $\lambda = 0.01$ provides an optimal, steady complementary signal without interfering with the primary RL objective. When $\lambda$ is increased to $0.1$, the distillation gradient overwhelmingly dominates the policy update; since the teacher is on average \emph{no confident} than the student in multi-turn settings (as evidenced by the negative gap in Figure~\ref{fig:7b_alfworld_gap_gate}), this over-weighted term forces the student toward inferior behaviors, causing a severe performance decline that overshadows the GRPO reward signal. Conversely, $\lambda = 0.001$ exerts insufficient corrective pressure to meaningfully aid the RL process, confirming the necessity of a carefully calibrated, moderate coefficient.

\paragraph{Distillation Objective.} Figure~\ref{fig:ablation_loss} compares three token-level matching objectives on Qwen2.5-7B: reverse KL (our default), forward KL, and Jensen--Shannon divergence (JSD), where JSD is defined as the symmetrized average with respect to the mixture $M_t = \tfrac{1}{2}\bigl(\pi_{\theta}(\cdot\mid s_t) + \pi_T(\cdot\mid s_t^{+})\bigr)$: $$ D_{\mathrm{JSD}}^{(t)} = \tfrac{1}{2}\,D_{\mathrm{KL}}\!\bigl(\pi_{\theta}(\cdot\mid s_t)\,\|\,M_t\bigr) + \tfrac{1}{2}\,D_{\mathrm{KL}}\!\bigl(\pi_T(\cdot\mid s_t^{+})\,\|\,M_t\bigr). $$ Reverse KL clearly outperforms both alternatives, aligning perfectly with our design rationale in Section~\ref{sec:opsd}: the reverse direction $D_{\mathrm{KL}}(\pi_{\theta}\|\pi_T)$ is inherently \emph{mode-seeking}~\citep{murphy2012machine}, encouraging the student to concentrate probability mass only on modes supported by the teacher. In our partial "weak" teacher signals---where the teacher is frequently lost---this selectivity is paramount, as reverse KL naturally down-weights tokens with low teacher probability, thereby seamlessly complementing the explicit gating mechanism. In contrast, the \emph{mode-covering} nature of forward KL forces the student to spread mass across all teacher-supported tokens, indiscriminately incorporating unreliable guidance, while JSD acts as a symmetric compromise that inherits this detrimental mode-covering tendency, ultimately yielding intermediate performance.

\section{Related Work}

\subsection{Agentic RL}

Recent advances in reinforcement learning for LLMs have demonstrated strong effectiveness on verifiable reasoning tasksn~\citep{shao2024deepseekmath,yu2025dapo,guo2025ds-r1,yao2026coba,chen2026learning}. 
Building on this progress, LLMs are increasingly extended from static reasoning problems to autonomous agents that operate in dynamic, open-world environments, including GUI automation~\citep{ye2025mobileagentv3}, gameplay~\citep{shridhar2020alfworld}, and embodied control~\citep{wang2023voyager}. 
In these settings, agents must make sequential decisions based on environment observations and feedback, making agentic RL a crucial post-training recipe for improving their decision-making capabilities~\citep{lu2025uis1,dong2025arpo,feng2025gigpo,lu2026uir1,lu2026uicopilot,shi2026skill1}.

\subsection{OPSD}
  On-policy distillation (OPD) supervises a student on its own generated sequences, avoiding offline distribution mismatch~\citep{agarwal2024gkd,gu2026minillm}. GKD-style methods~\citep{agarwal2024gkd,wen2023fdivergence} minimize token-level divergences but require full-vocabulary teacher distributions, while PG-style methods~\citep{yang2026rlsd,xu2026tip} convert discrepancy into token-level rewards but risk high-variance updates. For multi-turn agents, TCOD~\citep{wang2026tcod} applies a turn-level curriculum to mitigate compounding drift, but relies on rigid schedules. On-Policy Self-Distillation (OPSD)~\citep{zhao2026opsd,he2026sdzero} further removes the need for a separate teacher by conditioning only on privileged context.
\paragraph{Hybrid Methods} Recent works have explored combining RL with distillation to leverage their complementary strengths~\citep{wang2026skillsd,yang2026rlsd,ding2026hdpo}, but suffer from rigid hand-crafted scheduling or substantially unstable updates. In contrast, our method treats distillation as a strictly separate auxiliary objective with adaptive, bounded, token-level gating, preserving the unbiasedness of the RL advantage while selectively injecting only beneficial teacher signals.

  \section{Conclusion}
  We presented \methodname{}, which reconciles RL and OPSD for multi-turn agent training through a sigmoid gate that lets each    
  token autonomously regulate its distillation intensity. This preserves RL as the unbiased optimization backbone while           
  selectively extracting beneficial teacher signals. Experiments across three benchmarks and three model scales confirm consistent
   gains over both pure RL and hybrid baselines.                                                                                 
\newpage
\bibliography{colm2026_conference}

@misc{wang2026tcod,
      title={TCOD: Exploring Temporal Curriculum in On-Policy Distillation for Multi-turn Autonomous Agents}, 
      author={Jiaqi Wang and Wenhao Zhang and Weijie Shi and Yaliang Li and James Cheng},
      year={2026},
      eprint={2604.24005},
      archivePrefix={arXiv},
      primaryClass={cs.LG},
      url={https://arxiv.org/abs/2604.24005}, 
}

@misc{yang2026rlsd,
      title={Self-Distilled RLVR}, 
      author={Chenxu Yang and Chuanyu Qin and Qingyi Si and Minghui Chen and Naibin Gu and Dingyu Yao and Zheng Lin and Weiping Wang and Jiaqi Wang and Nan Duan},
      year={2026},
      eprint={2604.03128},
      archivePrefix={arXiv},
      primaryClass={cs.LG},
      url={https://arxiv.org/abs/2604.03128}, 
}

@misc{xia2026skillrl,
      title={SkillRL: Evolving Agents via Recursive Skill-Augmented Reinforcement Learning}, 
      author={Peng Xia and Jianwen Chen and Hanyang Wang and Jiaqi Liu and Kaide Zeng and Yu Wang and Siwei Han and Yiyang Zhou and Xujiang Zhao and Haifeng Chen and Zeyu Zheng and Cihang Xie and Huaxiu Yao},
      year={2026},
      eprint={2602.08234},
      archivePrefix={arXiv},
      primaryClass={cs.LG},
      url={https://arxiv.org/abs/2602.08234}, 
}

@misc{wang2026skillsd,
      title={Skill-SD: Skill-Conditioned Self-Distillation for Multi-turn LLM Agents}, 
      author={Hao Wang and Guozhi Wang and Han Xiao and Yufeng Zhou and Yue Pan and Jichao Wang and Ke Xu and Yafei Wen and Xiaohu Ruan and Xiaoxin Chen and Honggang Qi},
      year={2026},
      eprint={2604.10674},
      archivePrefix={arXiv},
      primaryClass={cs.LG},
      url={https://arxiv.org/abs/2604.10674}, 
}

@misc{lu2026skill0,
      title={SKILL0: In-Context Agentic Reinforcement Learning for Skill Internalization}, 
      author={Zhengxi Lu and Zhiyuan Yao and Jinyang Wu and Chengcheng Han and Qi Gu and Xunliang Cai and Weiming Lu and Jun Xiao and Yueting Zhuang and Yongliang Shen},
      year={2026},
      eprint={2604.02268},
      archivePrefix={arXiv},
      primaryClass={cs.LG},
      url={https://arxiv.org/abs/2604.02268}, 
}

@misc{xu2026tip,
      title={TIP: Token Importance in On-Policy Distillation}, 
      author={Yuanda Xu and Hejian Sang and Zhengze Zhou and Ran He and Zhipeng Wang and Alborz Geramifard},
      year={2026},
      eprint={2604.14084},
      archivePrefix={arXiv},
      primaryClass={cs.LG},
      url={https://arxiv.org/abs/2604.14084}, 
}

@article{yao2022webshop,
  title={Webshop: Towards scalable real-world web interaction with grounded language agents},
  author={Yao, Shunyu and Chen, Howard and Yang, John and Narasimhan, Karthik},
  journal={Advances in Neural Information Processing Systems},
  volume={35},
  pages={20744--20757},
  year={2022}
}

@article{shridhar2020alfworld,
  title={Alfworld: Aligning text and embodied environments for interactive learning},
  author={Shridhar, Mohit and Yuan, Xingdi and C{\^o}t{\'e}, Marc-Alexandre and Bisk, Yonatan and Trischler, Adam and Hausknecht, Matthew},
  journal={arXiv preprint arXiv:2010.03768},
  year={2020}
}

@article{jin2025searchr1,
  title={Search-r1: Training llms to reason and leverage search engines with reinforcement learning},
  author={Jin, Bowen and Zeng, Hansi and Yue, Zhenrui and Yoon, Jinsung and Arik, Sercan and Wang, Dong and Zamani, Hamed and Han, Jiawei},
  journal={arXiv preprint arXiv:2503.09516},
  year={2025}
}

@article{kwiatkowski2019nq,
  title={Natural questions: a benchmark for question answering research},
  author={Kwiatkowski, Tom and Palomaki, Jennimaria and Redfield, Olivia and Collins, Michael and Parikh, Ankur and Alberti, Chris and Epstein, Danielle and Polosukhin, Illia and Devlin, Jacob and Lee, Kenton and others},
  journal={Transactions of the Association for Computational Linguistics},
  volume={7},
  pages={453--466},
  year={2019},
  publisher={MIT Press One Rogers Street, Cambridge, MA 02142-1209, USA journals-info~…}
}

@inproceedings{joshi2017triviaqa,
  title={Triviaqa: A large scale distantly supervised challenge dataset for reading comprehension},
  author={Joshi, Mandar and Choi, Eunsol and Weld, Daniel S and Zettlemoyer, Luke},
  booktitle={Proceedings of the 55th Annual Meeting of the Association for Computational Linguistics (Volume 1: Long Papers)},
  pages={1601--1611},
  year={2017}
}

@inproceedings{mallen2023popqa,
  title={When not to trust language models: Investigating effectiveness of parametric and non-parametric memories},
  author={Mallen, Alex and Asai, Akari and Zhong, Victor and Das, Rajarshi and Khashabi, Daniel and Hajishirzi, Hannaneh},
  booktitle={Proceedings of the 61st annual meeting of the association for computational linguistics (volume 1: Long papers)},
  pages={9802--9822},
  year={2023}
}

@inproceedings{yang2018hotpotqa,
  title={HotpotQA: A dataset for diverse, explainable multi-hop question answering},
  author={Yang, Zhilin and Qi, Peng and Zhang, Saizheng and Bengio, Yoshua and Cohen, William and Salakhutdinov, Ruslan and Manning, Christopher D},
  booktitle={Proceedings of the 2018 conference on empirical methods in natural language processing},
  pages={2369--2380},
  year={2018}
}

@inproceedings{ho20202wiki,
  title={Constructing a multi-hop qa dataset for comprehensive evaluation of reasoning steps},
  author={Ho, Xanh and Nguyen, Anh-Khoa Duong and Sugawara, Saku and Aizawa, Akiko},
  booktitle={Proceedings of the 28th International Conference on Computational Linguistics},
  pages={6609--6625},
  year={2020}
}

@article{trivedi2022musique,
  title={MuSiQue: Multi-hop Questions via Single-hop Question Composition},
  author={Trivedi, Harsh and Balasubramanian, Niranjan and Khot, Tushar and Sabharwal, Ashish},
  journal={Transactions of the Association for Computational Linguistics},
  volume={10},
  pages={539--554},
  year={2022}
}

@inproceedings{press2023bamboogle,
  title={Measuring and narrowing the compositionality gap in language models},
  author={Press, Ofir and Zhang, Muru and Min, Sewon and Schmidt, Ludwig and Smith, Noah A and Lewis, Mike},
  booktitle={Findings of the Association for Computational Linguistics: EMNLP 2023},
  pages={5687--5711},
  year={2023}
}

@article{feng2025gigpo,
  title={Group-in-group policy optimization for llm agent training},
  author={Feng, Lang and Xue, Zhenghai and Liu, Tingcong and An, Bo},
  journal={arXiv preprint arXiv:2505.10978},
  year={2025}
}

@article{wang2022e5,
  title={Text embeddings by weakly-supervised contrastive pre-training},
  author={Wang, Liang and Yang, Nan and Huang, Xiaolong and Jiao, Binxing and Yang, Linjun and Jiang, Daxin and Majumder, Rangan and Wei, Furu},
  journal={arXiv preprint arXiv:2212.03533},
  year={2022}
}

@article{guo2025ds-r1,
  title={Deepseek-r1: Incentivizing reasoning capability in llms via reinforcement learning},
  author={Guo, Daya and Yang, Dejian and Zhang, Haowei and Song, Junxiao and Wang, Peiyi and Zhu, Qihao and Xu, Runxin and Zhang, Ruoyu and Ma, Shirong and Bi, Xiao and others},
  journal={arXiv preprint arXiv:2501.12948},
  year={2025}
}

@article{shao2024deepseekmath,
  title={Deepseekmath: Pushing the limits of mathematical reasoning in open language models},
  author={Shao, Zhihong and Wang, Peiyi and Zhu, Qihao and Xu, Runxin and Song, Junxiao and Bi, Xiao and Zhang, Haowei and Zhang, Mingchuan and Li, YK and Wu, Yang and others},
  journal={arXiv preprint arXiv:2402.03300},
  year={2024}
}

@misc{zhao2026opsd,
      title={Self-Distilled Reasoner: On-Policy Self-Distillation for Large Language Models}, 
      author={Siyan Zhao and Zhihui Xie and Mengchen Liu and Jing Huang and Guan Pang and Feiyu Chen and Aditya Grover},
      year={2026},
      eprint={2601.18734},
      archivePrefix={arXiv},
      primaryClass={cs.LG},
      url={https://arxiv.org/abs/2601.18734}, 
}

@book{murphy2012machine,
  title={Machine learning: a probabilistic perspective},
  author={Murphy, Kevin P},
  year={2012}
}

@article{yang2025qwen3,
  title={Qwen3 technical report},
  author={Yang, An and Li, Anfeng and Yang, Baosong and Zhang, Beichen and Hui, Binyuan and Zheng, Bo and Yu, Bowen and Gao, Chang and Huang, Chengen and Lv, Chenxu and others},
  journal={arXiv preprint arXiv:2505.09388},
  year={2025}
}

@article{team2025kimi,
  title={Kimi k2: Open agentic intelligence},
  author={Team, Kimi and Bai, Yifan and Bao, Yiping and Charles, Y and Chen, Cheng and Chen, Guanduo and Chen, Haiting and Chen, Huarong and Chen, Jiahao and Chen, Ningxin and others},
  journal={arXiv preprint arXiv:2507.20534},
  year={2025}
}

@inproceedings{lu2026uir1,
  title={Ui-r1: Enhancing efficient action prediction of gui agents by reinforcement learning},
  author={Lu, Zhengxi and Chai, Yuxiang and Guo, Yaxuan and Yin, Xi and Liu, Liang and Wang, Hao and Xiao, Han and Ren, Shuai and Zhao, Pengxiang and Liu, Guangyi and others},
  booktitle={Proceedings of the AAAI Conference on Artificial Intelligence},
  volume={40},
  number={21},
  pages={17608--17616},
  year={2026}
}

@article{lu2025uis1,
  title={Ui-s1: Advancing gui automation via semi-online reinforcement learning},
  author={Lu, Zhengxi and Ye, Jiabo and Tang, Fei and Shen, Yongliang and Xu, Haiyang and Zheng, Ziwei and Lu, Weiming and Yan, Ming and Huang, Fei and Xiao, Jun and others},
  journal={arXiv preprint arXiv:2509.11543},
  year={2025}
}

@inproceedings{shi2025toollearning,
  title={Tool learning in the wild: Empowering language models as automatic tool agents},
  author={Shi, Zhengliang and Gao, Shen and Yan, Lingyong and Feng, Yue and Chen, Xiuyi and Chen, Zhumin and Yin, Dawei and Verberne, Suzan and Ren, Zhaochun},
  booktitle={Proceedings of the ACM on Web Conference 2025},
  pages={2222--2237},
  year={2025}
}

@article{comanici2025gemini,
  title={Gemini 2.5: Pushing the frontier with advanced reasoning, multimodality, long context, and next generation agentic capabilities},
  author={Comanici, Gheorghe and Bieber, Eric and Schaekermann, Mike and Pasupat, Ice and Sachdeva, Noveen and Dhillon, Inderjit and Blistein, Marcel and Ram, Ori and Zhang, Dan and Rosen, Evan and others},
  journal={arXiv preprint arXiv:2507.06261},
  year={2025}
}

@article{team2026longcat-2601,
  title={Longcat-flash-thinking-2601 technical report},
  author={Team, Meituan LongCat and Gui, Anchun and Li, Bei and Tao, Bingyang and Zhou, Bole and Chen, Borun and Zhang, Chao and Gao, Chen and Zhang, Chen and Han, Chengcheng and others},
  journal={arXiv preprint arXiv:2601.16725},
  year={2026}
}

@article{shen2023hugginggpt,
  title={Hugginggpt: Solving ai tasks with chatgpt and its friends in hugging face},
  author={Shen, Yongliang and Song, Kaitao and Tan, Xu and Li, Dongsheng and Lu, Weiming and Zhuang, Yueting},
  journal={Advances in Neural Information Processing Systems},
  volume={36},
  pages={38154--38180},
  year={2023}
}

@article{jimenez2023swebench,
  title={Swe-bench: Can language models resolve real-world github issues?},
  author={Jimenez, Carlos E and Yang, John and Wettig, Alexander and Yao, Shunyu and Pei, Kexin and Press, Ofir and Narasimhan, Karthik},
  journal={arXiv preprint arXiv:2310.06770},
  year={2023}
}

@article{dong2025arpo,
  title={Agentic reinforced policy optimization},
  author={Dong, Guanting and Mao, Hangyu and Ma, Kai and Bao, Licheng and Chen, Yifei and Wang, Zhongyuan and Chen, Zhongxia and Du, Jiazhen and Wang, Huiyang and Zhang, Fuzheng and others},
  journal={arXiv preprint arXiv:2507.19849},
  year={2025}
}

@misc{ye2026opcd,
      title={On-Policy Context Distillation for Language Models}, 
      author={Tianzhu Ye and Li Dong and Xun Wu and Shaohan Huang and Furu Wei},
      year={2026},
      eprint={2602.12275},
      archivePrefix={arXiv},
      primaryClass={cs.CL},
      url={https://arxiv.org/abs/2602.12275}, 
}

@misc{yang2026g-opd,
      title={Learning beyond Teacher: Generalized On-Policy Distillation with Reward Extrapolation}, 
      author={Wenkai Yang and Weijie Liu and Ruobing Xie and Kai Yang and Saiyong Yang and Yankai Lin},
      year={2026},
      eprint={2602.12125},
      archivePrefix={arXiv},
      primaryClass={cs.LG},
      url={https://arxiv.org/abs/2602.12125}, 
}

@misc{he2026sdzero,
      title={Self-Distillation Zero: Self-Revision Turns Binary Rewards into Dense Supervision}, 
      author={Yinghui He and Simran Kaur and Adithya Bhaskar and Yongjin Yang and Jiarui Liu and Narutatsu Ri and Liam Fowl and Abhishek Panigrahi and Danqi Chen and Sanjeev Arora},
      year={2026},
      eprint={2604.12002},
      archivePrefix={arXiv},
      primaryClass={cs.CL},
      url={https://arxiv.org/abs/2604.12002}, 
}

@misc{coreteam2026mimov2,
      title={MiMo-V2-Flash Technical Report}, 
      author={Core Team and Bangjun Xiao and Bingquan Xia and Bo Yang and Bofei Gao and Bowen Shen and Chen Zhang and Chenhong He and Chiheng Lou and Fuli Luo and Gang Wang and Gang Xie and Hailin Zhang and Hanglong Lv and Hanyu Li and Heyu Chen and Hongshen Xu and Houbin Zhang and Huaqiu Liu and Jiangshan Duo and Jianyu Wei and Jiebao Xiao and Jinhao Dong and Jun Shi and Junhao Hu and Kainan Bao and Kang Zhou and Lei Li and Liang Zhao and Linghao Zhang and Peidian Li and Qianli Chen and Shaohui Liu and Shihua Yu and Shijie Cao and Shimao Chen and Shouqiu Yu and Shuo Liu and Tianling Zhou and Weijiang Su and Weikun Wang and Wenhan Ma and Xiangwei Deng and Bohan Mao and Bowen Ye and Can Cai and Chenghua Wang and Chengxuan Zhu and Chong Ma and Chun Chen and Chunan Li and Dawei Zhu and Deshan Xiao and Dong Zhang and Duo Zhang and Fangyue Liu and Feiyu Yang and Fengyuan Shi and Guoan Wang and Hao Tian and Hao Wu and Heng Qu and Hongfei Yi and Hongxu An and Hongyi Guan and Xing Zhang and Yifan Song and Yihan Yan and Yihao Zhao and Yingchun Lai and Yizhao Gao and Yu Cheng and Yuanyuan Tian and Yudong Wang and Zhen Tang and Zhengju Tang and Zhengtao Wen and Zhichao Song and Zhixian Zheng and Zihan Jiang and Jian Wen and Jiarui Sun and Jiawei Li and Jinlong Xue and Jun Xia and Kai Fang and Menghang Zhu and Nuo Chen and Qian Tu and Qihao Zhang and Qiying Wang and Rang Li and Rui Ma and Shaolei Zhang and Shengfan Wang and Shicheng Li and Shuhao Gu and Shuhuai Ren and Sirui Deng and Tao Guo and Tianyang Lu and Weiji Zhuang and Weikang Zhang and Weimin Xiong and Wenshan Huang and Wenyu Yang and Xin Zhang and Xing Yong and Xu Wang and Xueyang Xie and Yilin Jiang and Yixin Yang and Yongzhe He and Yu Tu and Yuanliang Dong and Yuchen Liu and Yue Ma and Yue Yu and Yuxing Xiang and Zhaojun Huang and Zhenru Lin and Zhipeng Xu and Zhiyang Chen and Zhonghua Deng and Zihan Zhang and Zihao Yue},
      year={2026},
      eprint={2601.02780},
      archivePrefix={arXiv},
      primaryClass={cs.CL},
      url={https://arxiv.org/abs/2601.02780}, 
}

@misc{glm5team2026glm5,
      title={GLM-5: from Vibe Coding to Agentic Engineering}, 
      author={GLM-5-Team and : and Aohan Zeng and Xin Lv and Zhenyu Hou and Zhengxiao Du and Qinkai Zheng and Bin Chen and Da Yin and Chendi Ge and Chenghua Huang and Chengxing Xie and Chenzheng Zhu and Congfeng Yin and Cunxiang Wang and Gengzheng Pan and Hao Zeng and Haoke Zhang and Haoran Wang and Huilong Chen and Jiajie Zhang and Jian Jiao and Jiaqi Guo and Jingsen Wang and Jingzhao Du and Jinzhu Wu and Kedong Wang and Lei Li and Lin Fan and Lucen Zhong and Mingdao Liu and Mingming Zhao and Pengfan Du and Qian Dong and Rui Lu and Shuang-Li and Shulin Cao and Song Liu and Ting Jiang and Xiaodong Chen and Xiaohan Zhang and Xuancheng Huang and Xuezhen Dong and Yabo Xu and Yao Wei and Yifan An and Yilin Niu and Yitong Zhu and Yuanhao Wen and Yukuo Cen and Yushi Bai and Zhongpei Qiao and Zihan Wang and Zikang Wang and Zilin Zhu and Ziqiang Liu and Zixuan Li and Bojie Wang and Bosi Wen and Can Huang and Changpeng Cai and Chao Yu and Chen Li and Chengwei Hu and Chenhui Zhang and Dan Zhang and Daoyan Lin and Dayong Yang and Di Wang and Ding Ai and Erle Zhu and Fangzhou Yi and Feiyu Chen and Guohong Wen and Hailong Sun and Haisha Zhao and Haiyi Hu and Hanchen Zhang and Hanrui Liu and Hanyu Zhang and Hao Peng and Hao Tai and Haobo Zhang and He Liu and Hongwei Wang and Hongxi Yan and Hongyu Ge and Huan Liu and Huanpeng Chu and Jia'ni Zhao and Jiachen Wang and Jiajing Zhao and Jiamin Ren and Jiapeng Wang and Jiaxin Zhang and Jiayi Gui and Jiayue Zhao and Jijie Li and Jing An and Jing Li and Jingwei Yuan and Jinhua Du and Jinxin Liu and Junkai Zhi and Junwen Duan and Kaiyue Zhou and Kangjian Wei and Ke Wang and Keyun Luo and Laiqiang Zhang and Leigang Sha and Liang Xu and Lindong Wu and Lintao Ding and Lu Chen and Minghao Li and Nianyi Lin and Pan Ta and Qiang Zou and Rongjun Song and Ruiqi Yang and Shangqing Tu and Shangtong Yang and Shaoxiang Wu and Shengyan Zhang and Shijie Li and Shuang Li and Shuyi Fan and Wei Qin and Wei Tian and Weining Zhang and Wenbo Yu and Wenjie Liang and Xiang Kuang and Xiangmeng Cheng and Xiangyang Li and Xiaoquan Yan and Xiaowei Hu and Xiaoying Ling and Xing Fan and Xingye Xia and Xinyuan Zhang and Xinze Zhang and Xirui Pan and Xu Zou and Xunkai Zhang and Yadi Liu and Yandong Wu and Yanfu Li and Yidong Wang and Yifan Zhu and Yijun Tan and Yilin Zhou and Yiming Pan and Ying Zhang and Yinpei Su and Yipeng Geng and Yong Yan and Yonglin Tan and Yuean Bi and Yuhan Shen and Yuhao Yang and Yujiang Li and Yunan Liu and Yunqing Wang and Yuntao Li and Yurong Wu and Yutao Zhang and Yuxi Duan and Yuxuan Zhang and Zezhen Liu and Zhengtao Jiang and Zhenhe Yan and Zheyu Zhang and Zhixiang Wei and Zhuo Chen and Zhuoer Feng and Zijun Yao and Ziwei Chai and Ziyuan Wang and Zuzhou Zhang and Bin Xu and Minlie Huang and Hongning Wang and Juanzi Li and Yuxiao Dong and Jie Tang},
      year={2026},
      eprint={2602.15763},
      archivePrefix={arXiv},
      primaryClass={cs.LG},
      url={https://arxiv.org/abs/2602.15763}, 
}

@misc{zhang2026embarrassinglysd,
      title={Embarrassingly Simple Self-Distillation Improves Code Generation}, 
      author={Ruixiang Zhang and Richard He Bai and Huangjie Zheng and Navdeep Jaitly and Ronan Collobert and Yizhe Zhang},
      year={2026},
      eprint={2604.01193},
      archivePrefix={arXiv},
      primaryClass={cs.CL},
      url={https://arxiv.org/abs/2604.01193}, 
}

@misc{ross2011dagger,
      title={A Reduction of Imitation Learning and Structured Prediction to No-Regret Online Learning}, 
      author={Stephane Ross and Geoffrey J. Gordon and J. Andrew Bagnell},
      year={2011},
      eprint={1011.0686},
      archivePrefix={arXiv},
      primaryClass={cs.LG},
      url={https://arxiv.org/abs/1011.0686}, 
}

@misc{ding2026hdpo,
      title={HDPO: Hybrid Distillation Policy Optimization via Privileged Self-Distillation}, 
      author={Ken Ding},
      year={2026},
      eprint={2603.23871},
      archivePrefix={arXiv},
      primaryClass={cs.LG},
      url={https://arxiv.org/abs/2603.23871}, 
}

@misc{chen2019learningcheating,
      title={Learning by Cheating}, 
      author={Dian Chen and Brady Zhou and Vladlen Koltun and Philipp Krähenbühl},
      year={2019},
      eprint={1912.12294},
      archivePrefix={arXiv},
      primaryClass={cs.RO},
      url={https://arxiv.org/abs/1912.12294}, 
}

@article{ye2025mobileagentv3,
  title={Mobile-agent-v3: Fundamental agents for gui automation},
  author={Ye, Jiabo and Zhang, Xi and Xu, Haiyang and Liu, Haowei and Wang, Junyang and Zhu, Zhaoqing and Zheng, Ziwei and Gao, Feiyu and Cao, Junjie and Lu, Zhengxi and others},
  journal={arXiv preprint arXiv:2508.15144},
  year={2025}
}

@article{wang2023voyager,
  title={Voyager: An open-ended embodied agent with large language models},
  author={Wang, Guanzhi and Xie, Yuqi and Jiang, Yunfan and Mandlekar, Ajay and Xiao, Chaowei and Zhu, Yuke and Fan, Linxi and Anandkumar, Anima},
  journal={arXiv preprint arXiv:2305.16291},
  year={2023}
}

@misc{agarwal2024gkd,
      title={On-Policy Distillation of Language Models: Learning from Self-Generated Mistakes}, 
      author={Rishabh Agarwal and Nino Vieillard and Yongchao Zhou and Piotr Stanczyk and Sabela Ramos and Matthieu Geist and Olivier Bachem},
      year={2024},
      eprint={2306.13649},
      archivePrefix={arXiv},
      primaryClass={cs.LG},
      url={https://arxiv.org/abs/2306.13649}, 
}

@misc{gu2026minillm,
      title={MiniLLM: On-Policy Distillation of Large Language Models}, 
      author={Yuxian Gu and Li Dong and Furu Wei and Minlie Huang},
      year={2026},
      eprint={2306.08543},
      archivePrefix={arXiv},
      primaryClass={cs.CL},
      url={https://arxiv.org/abs/2306.08543}, 
}

@misc{wen2023fdivergence,
      title={f-Divergence Minimization for Sequence-Level Knowledge Distillation}, 
      author={Yuqiao Wen and Zichao Li and Wenyu Du and Lili Mou},
      year={2023},
      eprint={2307.15190},
      archivePrefix={arXiv},
      primaryClass={cs.CL},
      url={https://arxiv.org/abs/2307.15190}, 
}

@article{yu2025dapo,
  title={Dapo: An open-source llm reinforcement learning system at scale},
  author={Yu, Qiying and Zhang, Zheng and Zhu, Ruofei and Yuan, Yufeng and Zuo, Xiaochen and Yue, Yu and Dai, Weinan and Fan, Tiantian and Liu, Gaohong and Liu, Lingjun and others},
  journal={arXiv preprint arXiv:2503.14476},
  year={2025}
}

@misc{lu2026uicopilot,
      title={UI-Copilot: Advancing Long-Horizon GUI Automation via Tool-Integrated Policy Optimization}, 
      author={Zhengxi Lu and Fei Tang and Guangyi Liu and Kaitao Song and Xu Tan and Jin Ma and Wenqi Zhang and Weiming Lu and Jun Xiao and Yueting Zhuang and Yongliang Shen},
      year={2026},
      eprint={2604.13822},
      archivePrefix={arXiv},
      primaryClass={cs.LG},
      url={https://arxiv.org/abs/2604.13822}, 
}

@article{yao2026coba,
  title={CoBA-RL: Capability-Oriented Budget Allocation for Reinforcement Learning in LLMs},
  author={Yao, Zhiyuan and Zhang, Yi-Kai and Chen, Yuxin and Sun, Yueqing and Xu, Zishan and Yang, Yu and Hu, Tianhao and Gu, Qi and Su, Hui and Cai, Xunliang},
  journal={arXiv preprint arXiv:2602.03048},
  year={2026}
}

@misc{shi2026skill1,
      title={Skill1: Unified Evolution of Skill-Augmented Agents via Reinforcement Learning}, 
      author={Yaorui Shi and Yuxin Chen and Zhengxi Lu and Yuchun Miao and Shugui Liu and Qi GU and Xunliang Cai and Xiang Wang and An Zhang},
      year={2026},
      eprint={2605.06130},
      archivePrefix={arXiv},
      primaryClass={cs.AI},
      url={https://arxiv.org/abs/2605.06130}, 
}

@article{chen2026learning,
  title={Learning to Self-Verify Makes Language Models Better Reasoners},
  author={Chen, Yuxin and Wang, Yu and Zhang, Yi and Ye, Ziang and Cai, Zhengzhou and Shi, Yaorui and Gu, Qi and Su, Hui and Cai, Xunliang and Wang, Xiang and others},
  journal={arXiv preprint arXiv:2602.07594},
  year={2026}
}
\bibliographystyle{colm2026_conference}
\newpage
\appendix
\renewcommand{\contentsname}{Table of Contents}
\setcounter{tocdepth}{2}
\tableofcontents
\newpage
\section{Theoretical Analysis}
\label{appendix:proof}
\subsection{Design Rationale}

The central design question is how the divergence signal should enter optimization.
We adopt the reverse-KL-aligned gap
\[
\Delta_t = \log \pi_T(y_t \mid s_t^{+}) - \log \pi_{\theta}(y_t \mid s_t)
\]
rather than forward KL, because it naturally evaluates on student-sampled tokens
and avoids the computationally expensive full-vocabulary matching.
However, using this raw gap directly as a coefficient would create overly strong,
unbounded token-level gradients during early training or under severe teacher-student mismatch.
To resolve this, we wrap the gap in a sigmoid function
\[
g_t = \sigma(\beta \Delta_t),
\]
which transforms the raw discrepancy into a bounded and monotone importance weight
\[
g_t \in (0,1), \qquad \frac{\partial g_t}{\partial \Delta_t} > 0.
\]
This preserves the ordering of token importance while strictly preventing gradient explosion.
Finally, we apply a stop-gradient operator to the gate.
Detaching $g_t$ ensures it acts purely as a confidence weight
rather than creating an additional, self-referential optimization pathway,
yielding a stable, first-order weighted likelihood update.

\subsection{Theoretical Properties}

We formalize the stability and curriculum properties of \methodname{} through the following propositions.

\begin{proposition}[Equivalent Weighted Likelihood Form]
\label{prop:weighted_mle}
Assume that both $\log \pi_T(y_t \mid s_t^{+})$ and $g_t$ are detached from gradient computation.
Minimizing $\mathcal{L}_{\methodname}$ is equivalent, up to an additive constant,
to maximizing a token-weighted log-likelihood objective on student-sampled tokens:
\[
\mathcal{L}_{\methodname}
= C - \operatorname{Agg}\!\left( g_t \log \pi_{\theta}(y_t \mid s_t) \right),
\]
where
\[
C = \operatorname{Agg}\!\left( g_t \log \pi_T(y_t \mid s_t^{+}) \right)
\]
is constant with respect to $\theta$.
\end{proposition}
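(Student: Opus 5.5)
The plan is to expand the token-level loss and use the fact that $\operatorname{Agg}$ is linear. By the definition in Section~\ref{sec:token_gating},
\[
\ell_t^{\,\methodname} = g_t \log \pi_T(y_t \mid s_t^{+}) - g_t \log \pi_{\theta}(y_t \mid s_t),
\]
with $\pi_T = \pi_\theta^{+}$ the teacher branch. The aggregation $\operatorname{Agg}(z_{1:T}) = \sum_t m_t z_t / \sum_t m_t$ is a fixed convex combination with weights $m_t/\sum_{t'} m_{t'}$. These weights depend only on the response mask, which is determined by the sampled trajectory and not by $\theta$. Therefore $\operatorname{Agg}(a_t - b_t) = \operatorname{Agg}(a_t) - \operatorname{Agg}(b_t)$, and applying this to the expansion gives
\[
\mathcal{L}_{\methodname} = \operatorname{Agg}\bigl(g_t \log \pi_T(y_t \mid s_t^{+})\bigr) - \operatorname{Agg}\bigl(g_t \log \pi_\theta(y_t \mid s_t)\bigr),
\]
which is the displayed identity with $C$ equal to the first term.

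The step needing care is the claim that $C$ is constant in $\theta$. This must be read in the stop-gradient sense of the hypothesis. Numerically, $\pi_\theta^{+}$ shares parameters with the student, and $g_t = \sigma(\beta\Delta_t)$ depends on $\theta$ through $\Delta_t$. Under the assumption, however, both $\log\pi_T(y_t\mid s_t^{+})$ and $g_t$ are wrapped in $\operatorname{sg}(\cdot)$. The sampled tokens $y_t$ and the mask $m_t$ come from rollouts that are also treated as fixed data in the update. Hence $\nabla_\theta C = 0$. I would state explicitly that ``constant'' means zero gradient at the current iterate, i.e.\ invariance of the optimization direction, not global invariance in $\theta$.

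Finally, I would conclude the equivalence at the level of gradients:
\[
\nabla_\theta \mathcal{L}_{\methodname} = -\operatorname{Agg}\bigl(g_t \nabla_\theta \log\pi_\theta(y_t\mid s_t)\bigr) = \nabla_\theta\bigl[-\operatorname{Agg}(g_t\log\pi_\theta(y_t\mid s_t))\bigr].
\]
So every gradient-based minimization step on $\mathcal{L}_{\methodname}$ coincides with a maximization step on the $g_t$-weighted log-likelihood of the student-sampled tokens. The algebra itself is immediate. The only real obstacle is phrasing ``equivalent up to an additive constant'' correctly under stop-gradient semantics, and I would handle it as described above.
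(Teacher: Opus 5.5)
Your proposal is correct and matches the paper's proof: you expand the gated token loss, split the masked average using linearity of $\operatorname{Agg}$, and take $C$ to be the term with zero gradient in $\theta$ because both $g_t$ and $\log\pi_T(y_t\mid s_t^{+})$ are detached. Your additions are sound refinements of the same argument rather than a different route: the stop-gradient reading of ``constant'' and the mask-only dependence of the $\operatorname{Agg}$ weights are clarifications, and the gradient identity is exactly the paper's next proposition.
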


\begin{proof}
By definition,
\[
\mathcal{L}_{\methodname}
= \operatorname{Agg}\!\left(
g_t \bigl(\log \pi_T(y_t \mid s_t^{+}) - \log \pi_{\theta}(y_t \mid s_t)\bigr)
\right)
\]
\[
= \operatorname{Agg}\!\left(
g_t \log \pi_T(y_t \mid s_t^{+})
\right)
-
\operatorname{Agg}\!\left(
g_t \log \pi_{\theta}(y_t \mid s_t)
\right)
\]
\[
= C - \operatorname{Agg}\!\left( g_t \log \pi_{\theta}(y_t \mid s_t) \right),
\]
where the first term $C = \operatorname{Agg}\!\left( g_t \log \pi_T(y_t \mid s_t^{+}) \right)$ is constant w.r.t.\ $\theta$ since both $g_t$ and $\log \pi_T(y_t \mid s_t^{+})$ are detached.
\end{proof}

\begin{proposition}[Gradient Form]
\label{prop:grad_form}
Under the same assumptions,
the gradient of $\mathcal{L}_{\methodname}$ is strictly modulated by the bounded scalar gate:
\[
\nabla_{\theta}\mathcal{L}_{\methodname}
= - \operatorname{Agg}\!\left( g_t \nabla_{\theta}\log \pi_{\theta}(y_t \mid s_t) \right).
\]
\end{proposition}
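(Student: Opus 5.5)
The plan is to differentiate the weighted-likelihood form from Proposition~\ref{prop:weighted_mle} directly. That form gives
\[
\mathcal{L}_{\methodname} = C - \operatorname{Agg}\!\left( g_t \log \pi_{\theta}(y_t \mid s_t) \right),
\]
where $C$ is constant in $\theta$, so $\nabla_\theta C = 0$. The remaining work is to move $\nabla_\theta$ inside the aggregation and inside each summand.

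First I will expand $\operatorname{Agg}$ explicitly as $\sum_t m_t z_t / \sum_t m_t$. The response mask $m_t$ depends only on which tokens are valid response tokens of the sampled trajectory, not on $\theta$. Hence the normalizer $\sum_t m_t$ is a fixed positive scalar, and differentiation commutes with the finite weighted sum:
\[
\nabla_\theta \operatorname{Agg}(z_{1:T}) = \operatorname{Agg}(\nabla_\theta z_{1:T}).
\]
Here the sampled tokens $y_t$ are treated as fixed data, as is standard for on-policy surrogate losses: the rollout is drawn before the loss is formed.

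Next, for each summand $z_t = g_t \log \pi_\theta(y_t \mid s_t)$, I will apply the product rule. Because $g_t$ enters through $\operatorname{sg}(\cdot)$, its gradient is zero by definition of the stop-gradient operator. This is the step that needs care, since $g_t = \sigma(\beta \Delta_t)$ and $\Delta_t$ itself contains $\log\pi_\theta(y_t\mid s_t)$. Without the detachment, an extra term $\beta\, g_t(1-g_t)\,\nabla_\theta\Delta_t \cdot \log\pi_\theta(y_t\mid s_t)$ would appear. The hypothesis ``under the same assumptions'' is exactly what removes it. So
\[
\nabla_\theta z_t = g_t \nabla_\theta \log\pi_\theta(y_t\mid s_t).
\]
The same argument covers $\log\pi_T(y_t\mid s_t^{+})$. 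Although it is computed by the same network $\pi_\theta^{+}$, it is detached, which is why $C$ is constant.

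Combining these steps gives
\[
\nabla_\theta\mathcal{L}_{\methodname} = -\operatorname{Agg}\!\left(g_t \nabla_\theta\log\pi_\theta(y_t\mid s_t)\right).
\]
Finally, I will record that $g_t\in(0,1)$ because $\sigma$ maps $\mathbb{R}$ into $(0,1)$. It follows that each per-token score-function term is scaled by a bounded nonnegative scalar, which is the sense of ``strictly modulated by the bounded scalar gate.''
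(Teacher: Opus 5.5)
Your proposal is correct and matches the paper's proof: both differentiate the weighted-likelihood form from Proposition~\ref{prop:weighted_mle}, use that $C$ is constant because $g_t$ and $\log\pi_T(y_t\mid s_t^{+})$ are detached, and move $\nabla_\theta$ inside $\operatorname{Agg}$. Your added justifications (the mask $m_t$ and the sampled tokens are $\theta$-independent, the stop-gradient removes the product-rule term from $g_t$, and $g_t\in(0,1)$) only spell out what the paper's one-line computation leaves implicit.
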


\begin{proof}
From Proposition~\ref{prop:weighted_mle},
\[
\nabla_{\theta}\mathcal{L}_{\methodname}
= \nabla_{\theta}\left[
C - \operatorname{Agg}\!\left( g_t \log \pi_{\theta}(y_t \mid s_t) \right)
\right]
\]
\[
= 0 - \operatorname{Agg}\!\left( g_t \nabla_{\theta}\log \pi_{\theta}(y_t \mid s_t) \right)
= - \operatorname{Agg}\!\left( g_t \nabla_{\theta}\log \pi_{\theta}(y_t \mid s_t) \right).
\]
\end{proof}

\begin{proposition}[Monotonicity and Smoothness of the Gate]
\label{prop:gate}
The gate $g_t = \sigma(\beta \Delta_t)$ is strictly increasing in $\Delta_t$,
inducing an online token-level curriculum where larger discrepancies receive stronger weights.
Its derivative satisfies
\[
\frac{\partial g_t}{\partial \Delta_t}
= \beta \,\sigma(\beta \Delta_t)\bigl(1-\sigma(\beta \Delta_t)\bigr)
\in (0,\,\beta/4].
\]
\end{proposition}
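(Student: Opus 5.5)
The plan is a direct computation with the logistic function $\sigma(z)=1/(1+e^{-z})$, treating $g_t=\sigma(\beta\Delta_t)$ as a scalar function of the real variable $\Delta_t$ with $\beta>0$ fixed. Detachment plays no role here: the claim concerns $g_t$ as a function of $\Delta_t$, not its $\theta$-gradient.

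First, I would establish the standard identity $\sigma'(z)=\sigma(z)\bigl(1-\sigma(z)\bigr)$. Differentiating $(1+e^{-z})^{-1}$ gives
\[
\sigma'(z)=\frac{e^{-z}}{(1+e^{-z})^{2}},
\]
and one checks $1-\sigma(z)=e^{-z}/(1+e^{-z})$. The chain rule with $z=\beta\Delta_t$ then yields
\[
\frac{\partial g_t}{\partial \Delta_t}=\beta\,\sigma(\beta\Delta_t)\bigl(1-\sigma(\beta\Delta_t)\bigr),
\]
which is the displayed formula.

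Second, for the range, set $p=\sigma(\beta\Delta_t)\in(0,1)$. Strict positivity holds because $e^{-z}>0$ gives $0<p<1$, so $p(1-p)>0$ and, since $\beta>0$, the derivative is strictly positive. Strict monotonicity of $g_t$ in $\Delta_t$ then follows from the positivity of the derivative via the mean value theorem. For the upper bound, AM--GM, or maximizing the concave quadratic $p(1-p)$, gives $p(1-p)\le 1/4$, with equality iff $p=1/2$, i.e. $\Delta_t=0$. Hence the derivative lies in $(0,\beta/4]$, and the value $\beta/4$ is attained, which justifies the closed endpoint.

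Smoothness is immediate, since $\sigma$ is a composition of $C^\infty$ functions with a nonvanishing denominator.

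The "curriculum" phrasing is interpretive, not a mathematical claim. I would discharge it by noting that monotonicity means a larger gap $\Delta_t$ always receives a larger weight in the gradient of Proposition~\ref{prop:grad_form}.

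There is no real obstacle. The only care point is stating that the supremum $\beta/4$ is attained at $\Delta_t=0$ while the lower bound $0$ is approached only as $|\Delta_t|\to\infty$, so the interval is half-open as written.
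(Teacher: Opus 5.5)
Your proposal is correct and follows essentially the same route as the paper's proof. Both apply the chain rule with $\sigma'(z)=\sigma(z)(1-\sigma(z))>0$ and then the AM--GM bound $u(1-u)\le 1/4$ for $u=\sigma(\beta\Delta_t)$; your additions (deriving the sigmoid identity, noting that $\beta/4$ is attained at $\Delta_t=0$, and invoking the mean value theorem for strict monotonicity) only make explicit what the paper leaves implicit.
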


\begin{proof}
By the chain rule,
\[
\frac{\partial g_t}{\partial \Delta_t}
= \beta \,\sigma'(\beta \Delta_t).
\]
Since the logistic sigmoid satisfies
\[
\sigma'(z) = \sigma(z)\bigl(1-\sigma(z)\bigr) > 0
\qquad \forall\, z \in \mathbb{R},
\]
we obtain
\[
\frac{\partial g_t}{\partial \Delta_t}
= \beta \,\sigma(\beta \Delta_t)\bigl(1-\sigma(\beta \Delta_t)\bigr)
> 0.
\]
Let $u = \sigma(\beta \Delta_t) \in (0,1)$:
\[
u(1-u) \le \left(\frac{u + (1-u)}{2}\right)^{\!2} = \frac{1}{4}
\]
\[
\frac{\partial g_t}{\partial \Delta_t}
= \beta\, u(1-u) \le \frac{\beta}{4}.
\]
\end{proof}

\begin{proposition}[Bounded Auxiliary Gradient]
\label{prop:bounded_grad}
Assume that $\|\nabla_{\theta}\log \pi_{\theta}(y_t \mid s_t)\| \le B_t$ for each valid token.
Then the gate cannot amplify the auxiliary gradient beyond the unweighted likelihood gradient:
\[
\left\| \nabla_{\theta}\mathcal{L}_{\methodname} \right\|
\le \operatorname{Agg}(B_t).
\]
\end{proposition}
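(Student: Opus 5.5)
We start from the gradient expression established in Proposition~\ref{prop:grad_form}, namely
\[
\nabla_{\theta}\mathcal{L}_{\methodname} = -\operatorname{Agg}\!\left( g_t \nabla_{\theta}\log \pi_{\theta}(y_t \mid s_t) \right) = -\frac{\sum_{t=1}^{T} m_t\, g_t\, \nabla_{\theta}\log \pi_{\theta}(y_t \mid s_t)}{\sum_{t=1}^{T} m_t}.
\]
The plan is to bound the norm of this masked average term by term. Since the denominator $\sum_t m_t$ is a positive scalar that does not depend on $\theta$, it factors out of the norm. We then apply the triangle inequality to the numerator, which gives
\[
\left\| \nabla_{\theta}\mathcal{L}_{\methodname} \right\| \le \frac{\sum_{t} m_t\, |g_t|\, \|\nabla_{\theta}\log \pi_{\theta}(y_t \mid s_t)\|}{\sum_t m_t}.
\]

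Next we use the two pointwise facts available. The first is $0 < g_t < 1$, which holds because $g_t = \sigma(\beta\Delta_t)$ lies in the range of the logistic sigmoid. This is the boundedness already noted in the design rationale and in Proposition~\ref{prop:gate}, and it holds equally for the entropy and soft-OR gates since each is a sigmoid of a detached score. The second is the hypothesis $\|\nabla_{\theta}\log \pi_{\theta}(y_t \mid s_t)\| \le B_t$ on every valid token. Because $m_t \in \{0,1\}$ is nonnegative, multiplying these bounds preserves each summand's inequality, so $m_t |g_t| \|\nabla_\theta \log\pi_\theta\| \le m_t B_t$. Summing over $t$ and dividing by $\sum_t m_t$ yields exactly $\operatorname{Agg}(B_t)$. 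The bound is in fact strict whenever some valid token has a nonzero gradient, since $g_t < 1$.

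The only point needing care, and where I expect the main (mild) obstacle, is justifying that $g_t$ contributes no gradient term of its own. This is why we invoke Proposition~\ref{prop:grad_form}, which relies on the stop-gradient assumption on both $g_t$ and $\log\pi_T$. Without detachment, an extra term $\nabla_\theta g_t \cdot (\log\pi_T - \log\pi_\theta)$ would appear. That term is controlled only by $\beta/4$ from Proposition~\ref{prop:gate} times the unbounded gap, and it would break the bound. We also implicitly assume at least one valid token, so that $\sum_t m_t > 0$ and $\operatorname{Agg}$ is well defined. Finally, we remark that the unweighted OPSD gradient, the case $g_t \equiv 1$, attains the same bound, so the gate never amplifies the auxiliary gradient.
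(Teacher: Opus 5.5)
Your proof is correct and follows the same route as the paper: start from Proposition~\ref{prop:grad_form}, apply the triangle inequality to the masked average, and bound each summand using $0<g_t<1$ and $\|\nabla_\theta \log\pi_\theta(y_t\mid s_t)\|\le B_t$. One minor correction to your side remark on non-detachment: the extra coefficient $\beta\Delta_t\,g_t(1-g_t)=z\,\sigma(z)(1-\sigma(z))$ with $z=\beta\Delta_t$ does not grow with the gap, since $\sigma(z)(1-\sigma(z))$ decays exponentially in $|z|$ and the coefficient stays below about $0.22$; the constant-$1$ bound still fails, but only because the total coefficient $g_t+\beta\Delta_t g_t(1-g_t)$ can reach roughly $1.1$.
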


\begin{proof}
By Proposition~\ref{prop:grad_form},
\[
\left\| \nabla_{\theta}\mathcal{L}_{\methodname} \right\|
= \left\| \operatorname{Agg}\!\left( g_t \nabla_{\theta}\log \pi_{\theta}(y_t \mid s_t) \right) \right\|
\]
\[
\le \operatorname{Agg}\!\left( g_t \left\| \nabla_{\theta}\log \pi_{\theta}(y_t \mid s_t) \right\| \right)
\]
\[
\le \operatorname{Agg}\!\left( 1 \cdot B_t \right)
= \operatorname{Agg}(B_t),
\]
where the first inequality is the triangle inequality and the second uses $0 < g_t < 1$ and $\|\nabla_{\theta}\log \pi_{\theta}(y_t \mid s_t)\| \le B_t$.
\end{proof}

\begin{proposition}[Effect of Not Detaching the Gate]
\label{prop:detach}
Without stop-gradient on the gate,
the non-detached token loss $\tilde{\ell}_t = \sigma(\beta \Delta_t)\,\Delta_t$
introduces an unstable self-referential coupling term into the gradient:
\[
\nabla_{\theta}\tilde{\ell}_t
= - \Bigl( g_t + \beta \Delta_t\, g_t(1-g_t) \Bigr)
\nabla_{\theta}\log \pi_{\theta}(y_t \mid s_t).
\]
\end{proposition}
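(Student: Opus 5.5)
We prove the identity by differentiating $\tilde{\ell}_t = \sigma(\beta\Delta_t)\,\Delta_t$ directly, now treating $\Delta_t$ as a function of $\theta$ in both places where it appears. As in Propositions~\ref{prop:weighted_mle} and~\ref{prop:grad_form}, we keep the teacher term $\log \pi_T(y_t \mid s_t^{+})$ detached. The only $\theta$-dependence is then through the student log-probability, so
\[
\nabla_\theta \Delta_t = -\nabla_\theta \log \pi_\theta(y_t \mid s_t).
\]
This one observation reduces the whole computation to scalar calculus times a single common gradient direction.

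Next we apply the product rule:
\[
\nabla_\theta \tilde{\ell}_t = \sigma(\beta\Delta_t)\,\nabla_\theta \Delta_t + \Delta_t\,\nabla_\theta \sigma(\beta\Delta_t).
\]
For the second factor we use the chain rule together with the derivative computed in Proposition~\ref{prop:gate}:
\[
\nabla_\theta \sigma(\beta\Delta_t) = \beta\, g_t(1-g_t)\,\nabla_\theta \Delta_t, \qquad g_t = \sigma(\beta\Delta_t).
\]
Substituting, both terms are multiples of $\nabla_\theta \Delta_t$, giving
\[
\nabla_\theta \tilde{\ell}_t = \bigl(g_t + \beta\Delta_t\, g_t(1-g_t)\bigr)\,\nabla_\theta\Delta_t.
\]
Replacing $\nabla_\theta\Delta_t$ by $-\nabla_\theta\log\pi_\theta(y_t\mid s_t)$ yields exactly the claimed expression. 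The first term, $g_t$, recovers the detached update of Proposition~\ref{prop:grad_form}. The extra term $\beta\Delta_t g_t(1-g_t)$ is therefore the self-referential coupling.

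The main point requiring care is the interpretive part of the claim, namely that the coupling term is "unstable", since the identity itself is routine. We justify it by contrasting the effective coefficient $g_t + \beta\Delta_t g_t(1-g_t)$ with the bounded coefficient $g_t \in (0,1)$ used in Proposition~\ref{prop:bounded_grad}.

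\textbf{Sign reversal.} For negative gaps the coefficient can change sign. Writing $z=\beta\Delta_t$, it equals $\sigma(z)\bigl(1+z(1-\sigma(z))\bigr)$, which is negative whenever $1+z(1-\sigma(z))<0$, for example at $z=-2$. There the update reverses direction and actively pushes down the student's own sampled token.

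\textbf{Loss of the bound.} The coefficient is no longer confined to $(0,1)$. Since $\sigma(z)(1-\sigma(z))\le 1/4$, the coupling magnitude is at most $|z|/4 = \beta|\Delta_t|/4$. Its actual size, $|z|\,\sigma(z)(1-\sigma(z))$, is largest in the transition region of moderate $|\Delta_t|$. It therefore grows with $\beta$ and depends on the unbounded raw gap, which is exactly the coupling that the stop-gradient removes.
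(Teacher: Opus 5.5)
Your derivation of the identity is correct and follows the same route as the paper's proof. You hold $\log\pi_T(y_t\mid s_t^{+})$ fixed so that $\nabla_\theta\Delta_t=-\nabla_\theta\log\pi_\theta(y_t\mid s_t)$, apply the chain rule to get $\nabla_\theta g_t=\beta g_t(1-g_t)\nabla_\theta\Delta_t$, then use the product rule and factor. The paper proves only this identity and gives no argument for the word ``unstable'', so everything after your derivation is extra. In that extra material, the sign-reversal remark is right, but the ``loss of the bound'' remark does not hold up.

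Write $z=\beta\Delta_t$. Your coefficient is $c(z)=\sigma(z)\bigl(1+z(1-\sigma(z))\bigr)=\frac{d}{dz}\bigl[z\sigma(z)\bigr]$. It is indeed negative exactly when $z<z^\ast\approx-1.278$, the root of $-z=1+e^{z}$.

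The coupling term, however, satisfies $z\sigma(z)(1-\sigma(z))=\tfrac{z}{4}\,\mathrm{sech}^2(z/2)$. Its magnitude is uniformly bounded by about $0.224$, attained at $|z|\approx1.54$, and it decays to $0$ as $|z|\to\infty$. So for fixed $\Delta_t\neq0$ it stops growing with $\beta$ once $\beta|\Delta_t|$ passes about $1.54$. Although $\Delta_t$ is unbounded, the term depends on it only through a bounded function. Using $c(z)+c(-z)=1$, one gets roughly $c(z)\in(-0.10,\,1.10)$. The analogue of Proposition~\ref{prop:bounded_grad} therefore survives with constant about $1.1$ instead of $1$. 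Your bound $\beta|\Delta_t|/4$ is valid but very loose, and should not be read as a magnitude blow-up.

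The genuine qualitative change is the one you identified first. Since $\tilde\ell_t=\beta^{-1}z\sigma(z)$ has an interior minimizer at $z^\ast$, the non-detached loss drives each token's gap toward $\beta\Delta_t\approx-1.278$. It raises the student's likelihood when $z>z^\ast$ and lowers it when $z<z^\ast$, instead of always reinforcing the sampled token with weight $g_t\in(0,1)$. If you want to justify ``unstable'', argue from this sign change and the resulting self-referential fixed point, not from unbounded growth.
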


\begin{proof}
Write $\tilde{\ell}_t = g_t \Delta_t$. Since $\log \pi_T(y_t \mid s_t^{+})$ is constant w.r.t.\ $\theta$,
\[
\nabla_{\theta}\Delta_t
= \nabla_{\theta}\bigl[\log \pi_T(y_t \mid s_t^{+}) - \log \pi_{\theta}(y_t \mid s_t)\bigr]
= -\nabla_{\theta}\log \pi_{\theta}(y_t \mid s_t).
\]
By the chain rule on $g_t = \sigma(\beta \Delta_t)$,
\[
\nabla_{\theta}g_t
= \beta\, \sigma'(\beta \Delta_t)\,\nabla_{\theta}\Delta_t
= \beta\, g_t(1-g_t)\,\nabla_{\theta}\Delta_t
= -\beta\, g_t(1-g_t)\,\nabla_{\theta}\log \pi_{\theta}(y_t \mid s_t).
\]
Applying the product rule,
\[
\nabla_{\theta}\tilde{\ell}_t
= (\nabla_{\theta}g_t)\,\Delta_t + g_t\,(\nabla_{\theta}\Delta_t)
\]
\[
= \bigl[-\beta\, g_t(1-g_t)\,\nabla_{\theta}\log \pi_{\theta}(y_t \mid s_t)\bigr]\,\Delta_t
  + g_t\,\bigl[-\nabla_{\theta}\log \pi_{\theta}(y_t \mid s_t)\bigr]
\]
\[
= -\beta\, g_t(1-g_t)\,\Delta_t\,\nabla_{\theta}\log \pi_{\theta}(y_t \mid s_t)
   - g_t\,\nabla_{\theta}\log \pi_{\theta}(y_t \mid s_t)
\]
\[
= -\Bigl( g_t + \beta \Delta_t\, g_t(1-g_t) \Bigr)\,
   \nabla_{\theta}\log \pi_{\theta}(y_t \mid s_t).
\]
\end{proof}

\section{Algorithm}
\label{appendix:algorithm}
The full procedure of \methodname{} is presented in Algorithm~\ref{alg:cgtd}.
We compare against five baselines listed below:
\begin{itemize}
  \item \textbf{GRPO}~\citep{shao2024deepseekmath} (Algorithm~\ref{alg:grpo}): RL baseline that optimizes the policy via a clipped surrogate objective with group-relative advantages.
  \item \textbf{OPSD}~\citep{zhao2026opsd} (Algorithm~\ref{alg:opsd}): an on-policy self-distillation method that distills token-level knowledge from a frozen reference policy $\pi_{\mathrm{ref}}$ into the student.
  \item \textbf{Skill-SD}~\citep{wang2026skillsd} (Algorithm~\ref{alg:skillsd}): a hybrid method that augments GRPO with an importance-weighted $K_3$-divergence distillation loss, using retrieved skills as privileged context to construct the teacher signal.
  \item \textbf{GRPO+OPSD} (Algorithm~\ref{alg:grpo_opsd}): a hybrid method that simply adds the OPSD distillation loss from $\pi_{\mathrm{ref}}$ as an auxiliary objective on top of GRPO training.
  \item \textbf{RLSD}~\citep{yang2026rlsd} (Algorithm~\ref{alg:rlsd}): a hybrid method that re-weights GRPO's advantages with self-teacher's gap.
\end{itemize}

\begin{algorithm}[h]
  \caption{\methodname}
  \label{alg:cgtd}                     
  \begin{algorithmic}[1]                                                                        
  \Require Policy $\pi_{\theta}$, task set $\mathcal{S}$, skill library $\mathcal{E} =          
  \{e_1,\dots,e_M\}$, group size $G$, mixing coefficient $\lambda$, sharpness $\beta$, clip     
  bound $\epsilon$                                                                              
  \For{each training iteration}                                                                 
      \State Sample a batch of tasks $\{x\}$ from $\mathcal{S}$                                 
      \For{each task $x$}                                                                       
          \State Retrieve skill $c^{+}$ from $\mathcal{E}$ \Comment{UCB / KM / Full / Random}   
          \State \mycomment{Step 1: On-policy rollout}                                          
          \State Sample $G$ responses $\{y^{(1)},\dots,y^{(G)}\} \sim \pi_{\theta}(\cdot \mid   
  x)$                                                                                           
          \State \mycomment{Step 2: Sequence-level advantage from environment}                  
          \For{$i = 1,\dots,G$}                                                                 
              \State Obtain reward $R(x, y^{(i)})$ from environment interaction                 
          \EndFor                                                                               
          \State Compute $A^{(i)} = \frac{R(x,y^{(i)}) - \mu_G}{\sigma_G}$                      
  \Comment{Group-relative advantage}                                                            
          \State \mycomment{Step 3: GRPO policy loss}                                           
          \For{$i = 1,\dots,G$}                                                                 
              \For{$t = 1,\dots,|y^{(i)}|$}                                                     
                  \State $r_t^{(i)} \gets \pi_{\theta}(y_t^{(i)} \mid s_t^{(i)}) \,/\,          
  \pi_{\theta_{\mathrm{old}}}(y_t^{(i)} \mid s_t^{(i)})$                                        
              \EndFor                                                                           
          \EndFor                                                                               
          \State Compute $\mathcal{L}_{\mathrm{GRPO}}$ via clipped surrogate with $\{A^{(i)},   
  r_t^{(i)}\}$                                                                                  
          \State \mycomment{Step 4: Token-level gated distillation}                             
          \For{$i = 1,\dots,G$}                                                                 
              \State Compute teacher logits via forward pass with $(x, c^{+}, y^{(i)})$         
              \For{$t = 1,\dots,|y^{(i)}|$}                                                     
                  \State $\Delta_t \gets \operatorname{sg}\!\bigl(\log\pi_{\theta}(y_t^{(i)}    
  \mid s_t^{+}) - \log\pi_{\theta}(y_t^{(i)} \mid s_t)\bigr)$           
                  \State $g_t \gets \sigma(\beta \cdot \Delta_t)$                                                          
                  \State $\ell_t \gets g_t \cdot \bigl(\log\pi_{\theta}(y_t^{(i)} \mid s_t^{+})
  - \log\pi_{\theta}(y_t^{(i)} \mid s_t)\bigr)$                                                 
              \EndFor
          \EndFor                                                                               
          \State $\mathcal{L}_{\methodname} \gets                                               
  \frac{1}{G}\sum_{i=1}^{G}\operatorname{Agg}\!\bigl(\ell_t^{(i)}\bigr)$                        
          \State \mycomment{Step 5: Joint policy update}                                        
          \State Update $\theta$ by minimizing $\mathcal{L}(\theta) =                           
  \mathcal{L}_{\mathrm{GRPO}}(\theta) + \lambda \cdot \mathcal{L}_{\methodname}(\theta)$        
      \EndFor                                                                                   
  \EndFor                                                                                       
  \end{algorithmic}
  \end{algorithm}

  \begin{algorithm}[t]
  \caption{GRPO}
  \label{alg:grpo}
  \begin{algorithmic}[1]
  \Require Policy $\pi_{\theta}$, task set $\mathcal{S}$, group size $G$, clip
  bounds $\epsilon_{\mathrm{lo}},\epsilon_{\mathrm{hi}}$, dual-clip constant $c$
  \For{each training iteration}
      \State Sample a batch of tasks $\{x\}$ from $\mathcal{S}$
      \For{each task $x$}
          \State \mycomment{Step 1: On-policy rollout}
          \State Sample $G$ responses $\{y^{(1)},\dots,y^{(G)}\} \sim \pi_{\theta}(\cdot \mid
  x)$
          \State \mycomment{Step 2: Sequence-level advantage from environment}
          \For{$i = 1,\dots,G$}
              \State Obtain reward $R(x, y^{(i)})$ from environment interaction
          \EndFor
          \State Compute $A^{(i)} = \frac{R(x,y^{(i)}) - \mu_G}{\sigma_G}$
  \Comment{Group-relative advantage}
          \State \mycomment{Step 3: Clipped surrogate policy loss}
          \For{$i = 1,\dots,G$}
              \For{$t = 1,\dots,|y^{(i)}|$}
                  \State $r_t^{(i)} \gets \pi_{\theta}(y_t^{(i)} \mid s_t^{(i)}) \,/\,
  \pi_{\theta_{\mathrm{old}}}(y_t^{(i)} \mid s_t^{(i)})$
                  \State $L_1 \gets -A^{(i)} r_t^{(i)}$
                  \State $L_2 \gets -A^{(i)} \operatorname{clip}(r_t^{(i)},\,
  1{-}\epsilon_{\mathrm{lo}},\, 1{+}\epsilon_{\mathrm{hi}})$
                  \State $\ell_t^{(i)} \gets \begin{cases}
  \min(-A^{(i)} c,\;\max(L_1, L_2)) & \text{if } A^{(i)} < 0 \\
  \max(L_1, L_2) & \text{otherwise}
  \end{cases}$
              \EndFor
          \EndFor
          \State $\mathcal{L}_{\mathrm{GRPO}} \gets
  \operatorname{Agg}\!\bigl(\{\ell_t^{(i)}\}\bigr)$
          \State \mycomment{Step 4: Policy update}
          \State Update $\theta$ by minimizing $\mathcal{L}(\theta) =
  \mathcal{L}_{\mathrm{GRPO}}(\theta)$
      \EndFor
  \EndFor
  \end{algorithmic}
\end{algorithm}

\begin{algorithm}[t]
  \caption{OPSD}
  \label{alg:opsd}
  \begin{algorithmic}[1]
  \Require Policy $\pi_{\theta}$, frozen reference $\pi_{\mathrm{ref}}$, task set
  $\mathcal{S}$, group size $G$, KL coefficient $\alpha$
  \For{each training iteration}
      \State Sample a batch of tasks $\{x\}$ from $\mathcal{S}$
      \For{each task $x$}
          \State \mycomment{Step 1: On-policy rollout}
          \State Sample $G$ responses $\{y^{(1)},\dots,y^{(G)}\} \sim \pi_{\theta}(\cdot \mid
  x)$
          \State \mycomment{Step 2: Token-level KL distillation from reference}
          \For{$i = 1,\dots,G$}
              \For{$t = 1,\dots,|y^{(i)}|$}
                  \State $d_t^{(i)} \gets \log\pi_{\theta}(y_t^{(i)} \mid s_t) -
  \log\pi_{\mathrm{ref}}(y_t^{(i)} \mid s_t)$
  \Comment{$D_{\mathrm{KL}}(\pi_\theta \| \pi_{\mathrm{ref}})$}
              \EndFor
          \EndFor
          \State $\mathcal{L}_{\mathrm{OPSD}} \gets
  \alpha \cdot \operatorname{Agg}\!\bigl(\{d_t^{(i)}\}\bigr)$
          \State \mycomment{Step 3: Policy update}
          \State Update $\theta$ by minimizing $\mathcal{L}(\theta) =
  \mathcal{L}_{\mathrm{OPSD}}(\theta)$
      \EndFor
  \EndFor
  \end{algorithmic}
\end{algorithm}

\begin{algorithm}[t]
  \caption{Skill-SD}
  \label{alg:skillsd}
  \begin{algorithmic}[1]
  \Require Policy $\pi_{\theta}$, task set $\mathcal{S}$, skill library $\mathcal{E} =
  \{e_1,\dots,e_M\}$, group size $G$, distillation coefficient $\lambda$, clip
  bound $\epsilon$
  \For{each training iteration}
      \State Sample a batch of tasks $\{x\}$ from $\mathcal{S}$
      \For{each task $x$}
          \State Retrieve skill $c^{+}$ from $\mathcal{E}$ \Comment{UCB}
          \State \mycomment{Step 1: On-policy rollout}
          \State Sample $G$ responses $\{y^{(1)},\dots,y^{(G)}\} \sim \pi_{\theta}(\cdot \mid
  x)$
          \State \mycomment{Step 2: Sequence-level advantage from environment}
          \For{$i = 1,\dots,G$}
              \State Obtain reward $R(x, y^{(i)})$ from environment interaction
          \EndFor
          \State Compute $A^{(i)} = \frac{R(x,y^{(i)}) - \mu_G}{\sigma_G}$
  \Comment{Group-relative advantage}
          \State \mycomment{Step 3: GRPO policy loss (same as Algorithm~\ref{alg:grpo})}
          \State Compute $\mathcal{L}_{\mathrm{GRPO}}$ via clipped surrogate with $\{A^{(i)},
  r_t^{(i)}\}$
          \State \mycomment{Step 4: Importance-weighted K3 distillation}
          \For{$i = 1,\dots,G$}
              \State Compute teacher log-probs via forward pass with $(x, c^{+}, y^{(i)})$
              \For{$t = 1,\dots,|y^{(i)}|$}
                  \State $d_t \gets \log\pi_{\theta}(y_t^{(i)} \mid s_t) -
  \log\pi_{\theta}(y_t^{(i)} \mid s_t^{+})$
  \Comment{Student $-$ Teacher}
                  \State $k_t \gets \exp(-d_t) - 1 + d_t$
  \Comment{$K_3$ divergence}
                  \State $\rho_t \gets \exp\!\bigl(\log\pi_{\theta}(y_t^{(i)} \mid s_t) -
  \log\pi_{\theta_{\mathrm{old}}}(y_t^{(i)} \mid s_t)\bigr)$
  \Comment{On-policy IS ratio}
                  \State $\ell_t^{(i)} \gets \rho_t \cdot k_t$
              \EndFor
          \EndFor
          \State $\mathcal{L}_{\text{Skill-SD}} \gets
  \operatorname{Agg}\!\bigl(\{\ell_t^{(i)}\}\bigr)$
          \State \mycomment{Step 5: Joint policy update}
          \State Update $\theta$ by minimizing $\mathcal{L}(\theta) =
  \mathcal{L}_{\mathrm{GRPO}}(\theta) + \lambda \cdot \mathcal{L}_{\text{Skill-SD}}(\theta)$
      \EndFor
  \EndFor
  \end{algorithmic}
\end{algorithm}

\begin{algorithm}[t]
  \caption{GRPO+OPSD}
  \label{alg:grpo_opsd}
  \begin{algorithmic}[1]
  \Require Policy $\pi_{\theta}$, frozen reference $\pi_{\mathrm{ref}}$, task set
  $\mathcal{S}$, group size $G$, KL coefficient $\alpha$, clip bound $\epsilon$
  \For{each training iteration}
      \State Sample a batch of tasks $\{x\}$ from $\mathcal{S}$
      \For{each task $x$}
          \State \mycomment{Step 1: On-policy rollout}
          \State Sample $G$ responses $\{y^{(1)},\dots,y^{(G)}\} \sim \pi_{\theta}(\cdot \mid
  x)$
          \State \mycomment{Step 2: Sequence-level advantage from environment}
          \For{$i = 1,\dots,G$}
              \State Obtain reward $R(x, y^{(i)})$ from environment interaction
          \EndFor
          \State Compute $A^{(i)} = \frac{R(x,y^{(i)}) - \mu_G}{\sigma_G}$
  \Comment{Group-relative advantage}
          \State \mycomment{Step 3: GRPO policy loss (same as Algorithm~\ref{alg:grpo})}
          \State Compute $\mathcal{L}_{\mathrm{GRPO}}$ via clipped surrogate with $\{A^{(i)},
  r_t^{(i)}\}$
          \State \mycomment{Step 4: Token-level KL penalty toward $\pi_{\mathrm{ref}}$}
          \For{$i = 1,\dots,G$}
              \For{$t = 1,\dots,|y^{(i)}|$}
                  \State $d_t^{(i)} \gets \log\pi_{\theta}(y_t^{(i)} \mid s_t) -
  \log\pi_{\mathrm{ref}}(y_t^{(i)} \mid s_t)$
              \EndFor
          \EndFor
          \State $\mathcal{L}_{\mathrm{OPSD}} \gets
  \alpha \cdot \operatorname{Agg}\!\bigl(\{d_t^{(i)}\}\bigr)$
          \State \mycomment{Step 5: Joint policy update}
          \State Update $\theta$ by minimizing $\mathcal{L}(\theta) =
  \mathcal{L}_{\mathrm{GRPO}}(\theta) + \mathcal{L}_{\mathrm{OPSD}}(\theta)$
      \EndFor
  \EndFor
  \end{algorithmic}
\end{algorithm}

\begin{algorithm}[t]
  \caption{RLSD}
  \label{alg:rlsd}
  \begin{algorithmic}[1]
  \Require Policy $\pi_{\theta}$, task set $\mathcal{S}$, skill library $\mathcal{E} =
  \{e_1,\dots,e_M\}$, group size $G$, mixing coefficient $\lambda$, weight clip
  bound $\epsilon_w$, policy clip bound $\epsilon$
  \For{each training iteration}
      \State Sample a batch of tasks $\{x\}$ from $\mathcal{S}$
      \For{each task $x$}
          \State Retrieve skill $c^{+}$ from $\mathcal{E}$ \Comment{UCB / KM / Full / Random}
          \State \mycomment{Step 1: On-policy rollout}
          \State Sample $G$ responses $\{y^{(1)},\dots,y^{(G)}\} \sim \pi_{\theta}(\cdot \mid
  x)$
          \State \mycomment{Step 2: Sequence-level advantage from environment}
          \For{$i = 1,\dots,G$}
              \State Obtain reward $R(x, y^{(i)})$ from environment interaction
          \EndFor
          \State Compute $A^{(i)} = \frac{R(x,y^{(i)}) - \mu_G}{\sigma_G}$
  \Comment{Group-relative advantage}
          \State \mycomment{Step 3: Token-level advantage reweighting via teacher}
          \For{$i = 1,\dots,G$}
              \State Compute teacher log-probs via forward pass with $(x, c^{+}, y^{(i)})$
              \For{$t = 1,\dots,|y^{(i)}|$}
                  \State $\delta_t \gets \log\pi_{\theta}(y_t^{(i)} \mid s_t^{+}) -
  \log\pi_{\theta_{\mathrm{old}}}(y_t^{(i)} \mid s_t)$
  \Comment{Teacher $-$ Student gap}
                  \State $w_t \gets \operatorname{clip}\!\bigl(\exp(\operatorname{sign}(A^{(i)})
  \cdot \delta_t),\;1{-}\epsilon_w,\;1{+}\epsilon_w\bigr)$
                  \State $\hat{A}_t^{(i)} \gets A^{(i)} \cdot
  \bigl[(1-\lambda) + \lambda \cdot w_t\bigr]$
              \EndFor
          \EndFor
          \State \mycomment{Step 4: Clipped surrogate with token-level advantages}
          \State Compute $\mathcal{L}_{\mathrm{RLSD}}$ via clipped surrogate with
  $\{\hat{A}_t^{(i)},\, r_t^{(i)}\}$
          \State \mycomment{Step 5: Policy update}
          \State Update $\theta$ by minimizing $\mathcal{L}(\theta) =
  \mathcal{L}_{\mathrm{RLSD}}(\theta)$
      \EndFor
  \EndFor
  \end{algorithmic}
\end{algorithm}

\section{Hyperparameters}
Table~\ref{tab:hyperparams} summarizes the method-specific hyperparameters used for all baselines and \methodname{} across our experiments. 
\begin{table}[h]
\centering
\caption{\textbf{Hyperparameters}. $\eta$: learning rate; $G$: group size; $\epsilon$: PPO clip ratio; $\lambda$: distillation loss coefficient; $\beta$: sigmoid gate sharpness; $\alpha_{\mathrm{KL}}$: KL penalty coefficient toward the reference policy; SRS: skill retrieval strategy (KM = keyword matching).}
\label{tab:hyperparams}
\begin{tabular}{l c c c c c c c}
\toprule
\textbf{Method} & $\eta$ & $G$ & $\epsilon$ & $\lambda$ & $\beta$ & $\alpha_{\mathrm{KL}}$ & SRS \\
\midrule
GRPO & $10^{-6}$ & 8 & 0.2 & --- & --- & 0.01 & --- \\
Skill-GRPO & $10^{-6}$ & 8 & 0.2 & --- & --- & 0.01 & KM \\
OPSD & $10^{-6}$ & ---   & --- & 0.01 & 5.0 & 0.01 & KM \\
Skill-SD & $10^{-6}$ & 8 & 0.2 & 0.001 & --- & 0.01 & KM \\
GRPO+OPSD & $10^{-6}$ & 8 & 0.2 & 0.01 & 0.0 & 0.01 & KM \\
RLSD & $10^{-6}$ & 8 & 0.2 & 0.5 & --- & 0.01 & KM \\
\methodname{} (Ours) & $10^{-6}$ & 8 & 0.2 & 0.01 & 5.0 & 0.01 & KM \\
\bottomrule
\end{tabular}
\end{table}
\section{Training Dynamics}
We present the full training dynamics of \methodname{} across all model scales and environments in Figures~\ref{fig:metrics_cgtd_gate_active_ratio}--\ref{fig:metrics_critic_score_mean}, tracking five diagnostic metrics throughout training.

\begin{figure}[h]
\centering
\includegraphics[width=\columnwidth]{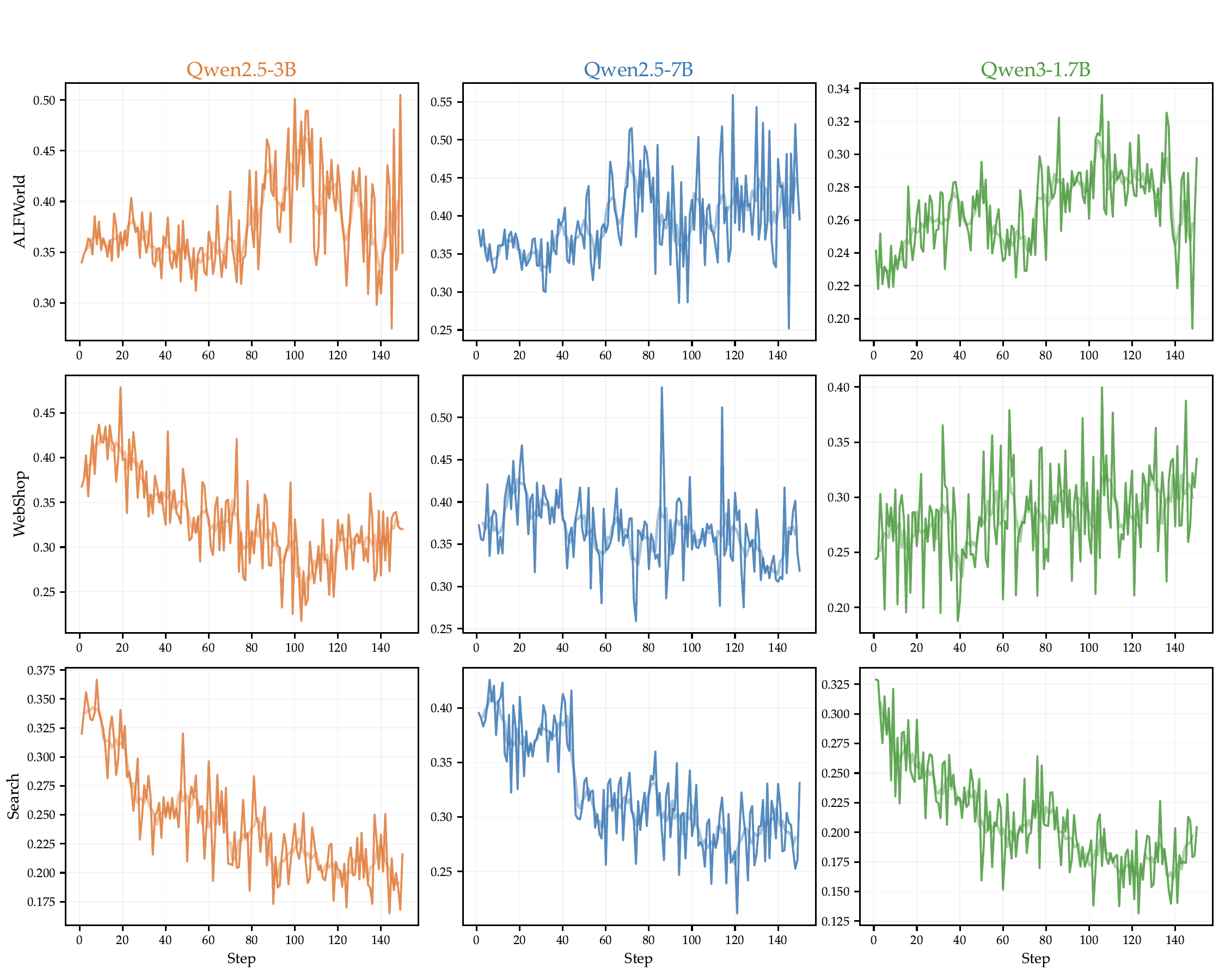}
\caption{\textbf{Gate Active Ratio} when training Qwen2.5-3B, Qwen2.5-7B and Qwen3-1.7B on ALFWorld, WebShop and Search-QA.}
\label{fig:metrics_cgtd_gate_active_ratio}
\end{figure}

\begin{figure}[h]
\centering
\includegraphics[width=\columnwidth]{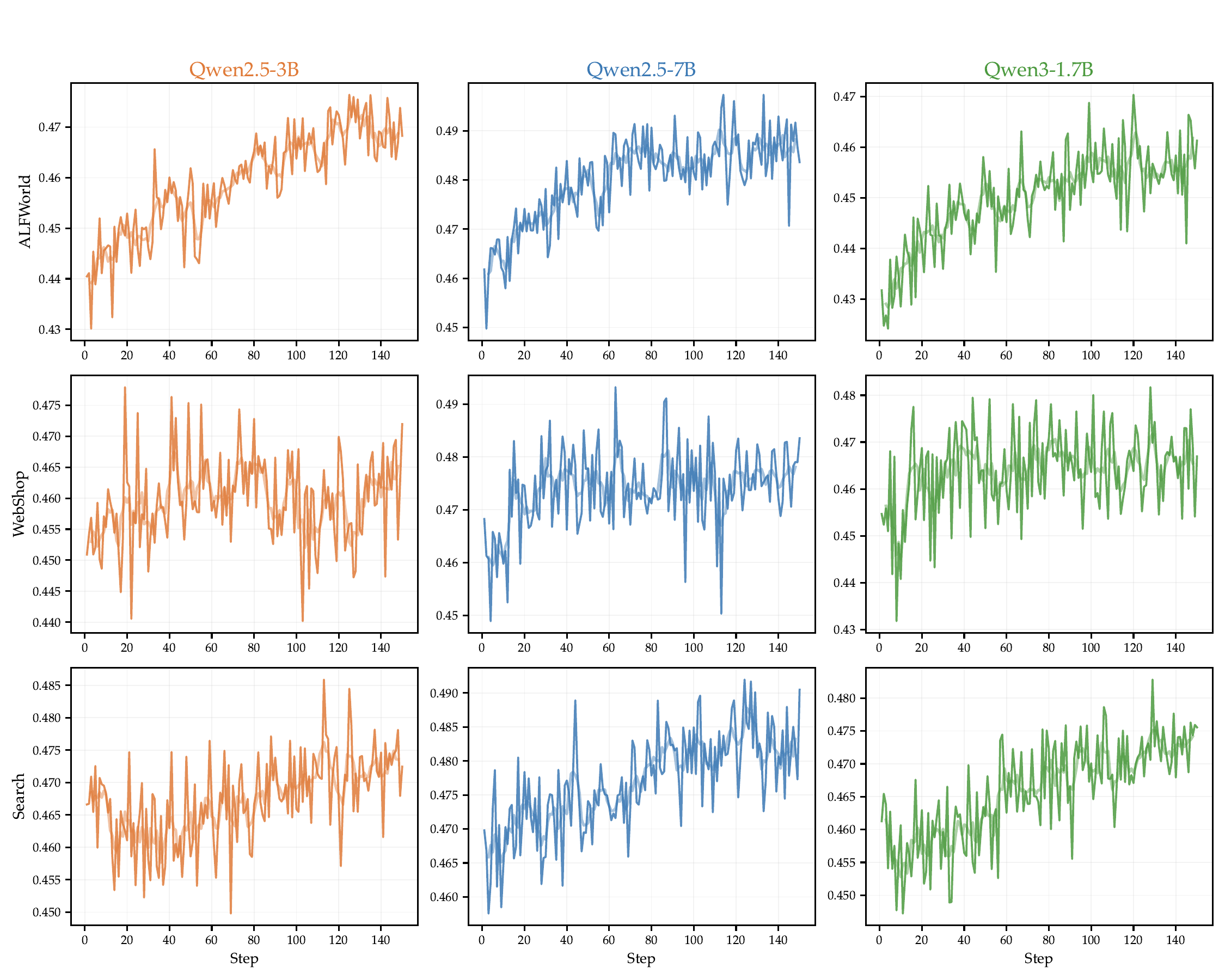}
\caption{\textbf{Gate Mean} when training Qwen2.5-3B, Qwen2.5-7B and Qwen3-1.7B on ALFWorld, WebShop and Search-QA.}
\label{fig:metrics_cgtd_gate_mean}
\end{figure}

\begin{figure}[h]
\centering
\includegraphics[width=\columnwidth]{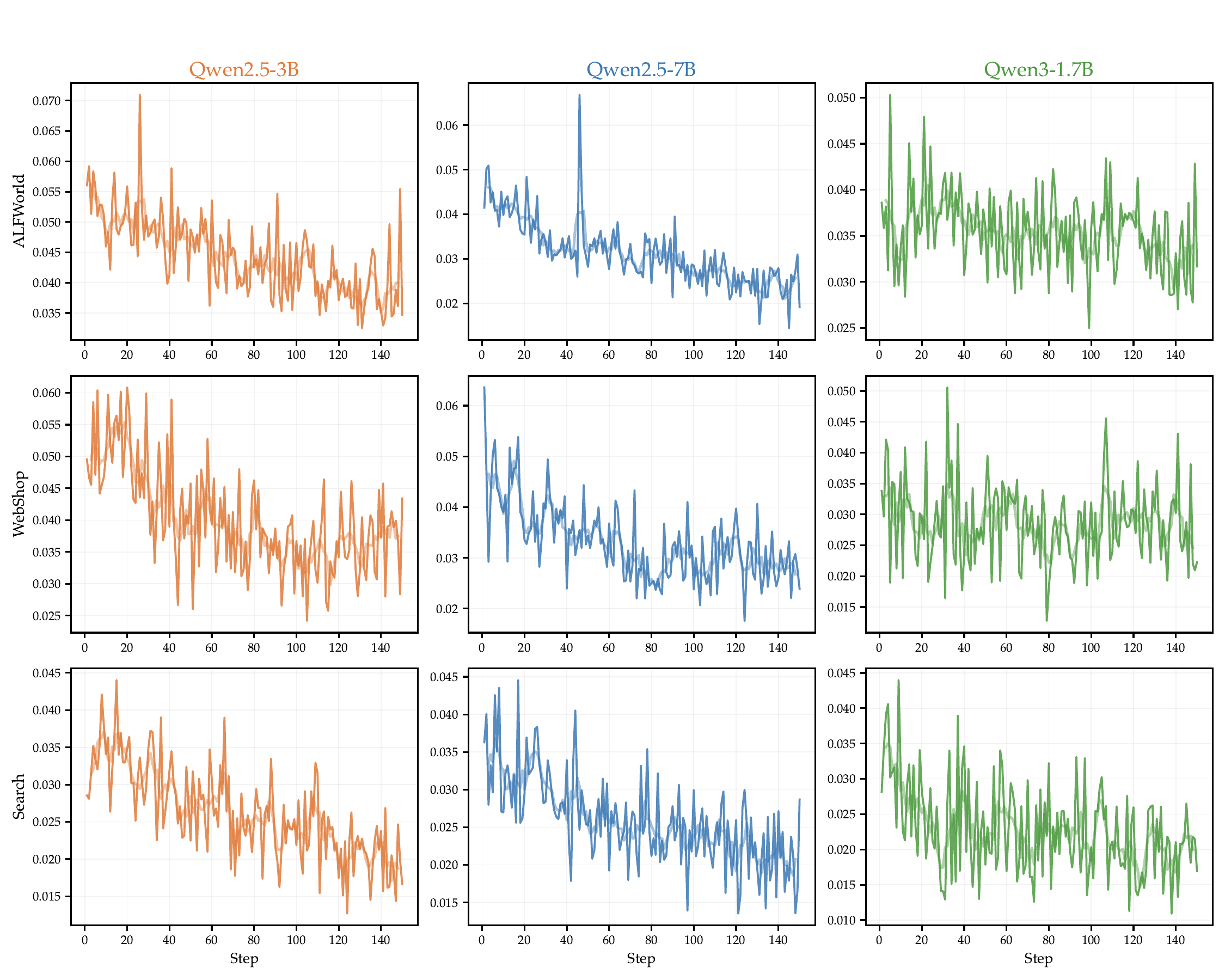}
\caption{\textbf{OPSD Loss} when training Qwen2.5-3B, Qwen2.5-7B and Qwen3-1.7B on ALFWorld, WebShop and Search-QA.}
\label{fig:metrics_cgtd_loss}
\end{figure}

\begin{figure}[h]
\centering
\includegraphics[width=\columnwidth]{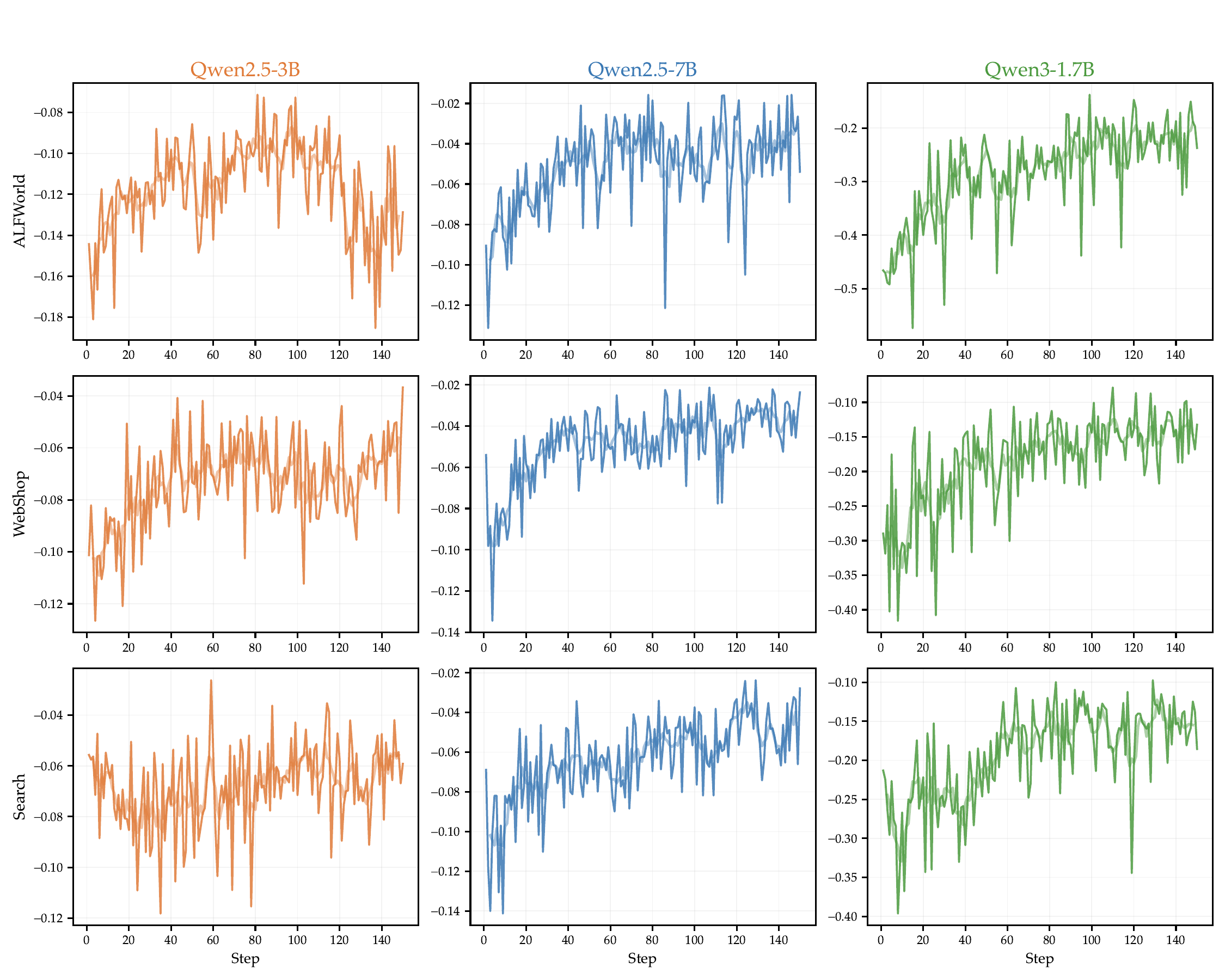}
\caption{\textbf{Teacher-Student Gap} when training Qwen2.5-3B, Qwen2.5-7B and Qwen3-1.7B on ALFWorld, WebShop and Search-QA.}
\label{fig:metrics_cgtd_teacher_gap_mean}
\end{figure}

\begin{figure}[h]
\centering
\includegraphics[width=\columnwidth]{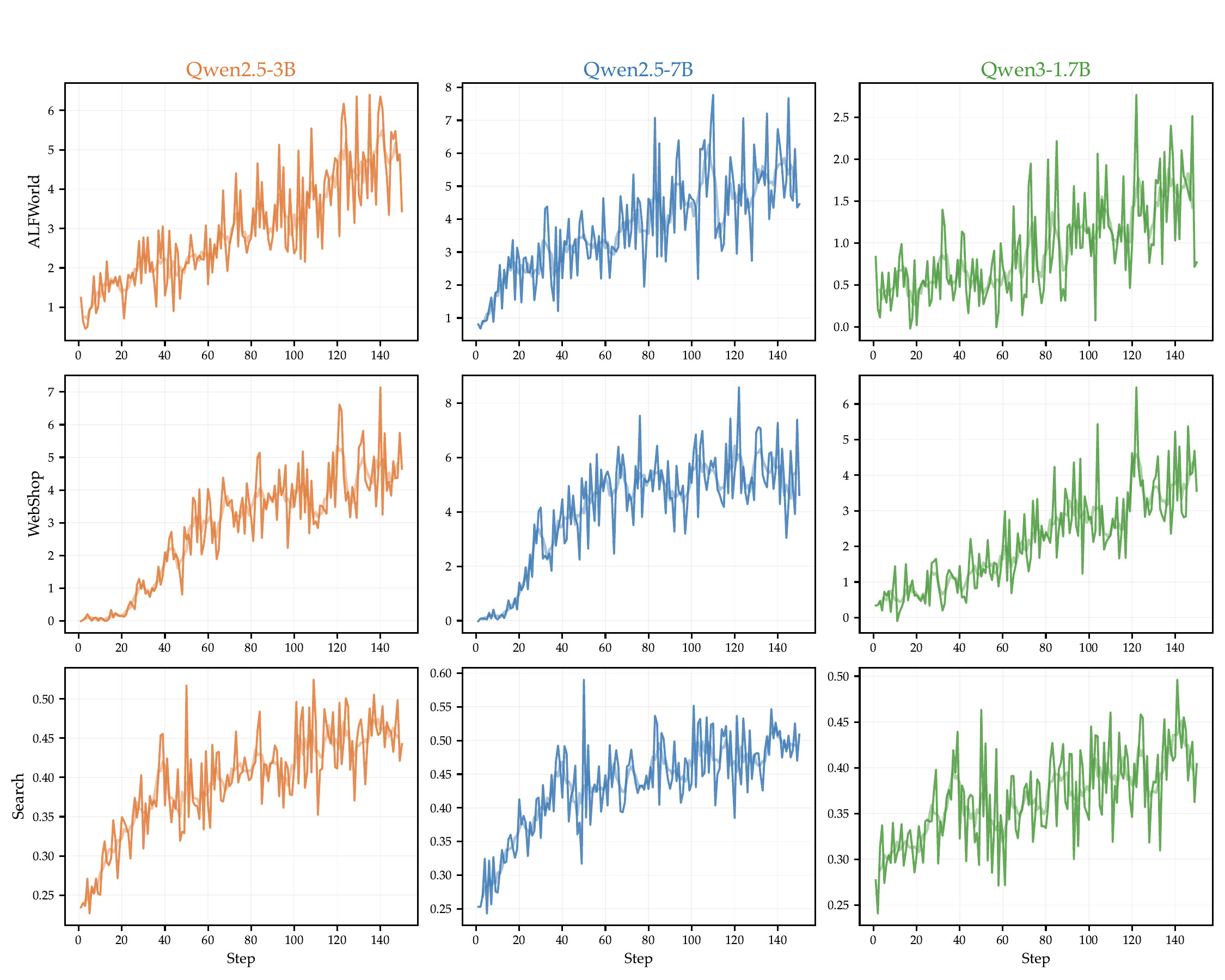}
\caption{\textbf{Reward Curve} when training Qwen2.5-3B, Qwen2.5-7B and Qwen3-1.7B on ALFWorld, WebShop and Search-QA.}
\label{fig:metrics_critic_score_mean}
\end{figure}

\newpage
\section{Prompt}

Figures~\ref{fig:prompt_alfworld}--\ref{fig:prompt_webshop} present the full prompt templates used by \methodname{} for the three evaluation environments, where \texttt{\{skill\_context\}} is populated with the retrieved skill during training and left empty at inference time. 

\begin{figure}[h]
\centering
\begin{templatebox}{Prompt of \methodname on ALFWorld}
You are an expert agent operating in the ALFRED Embodied Environment. Your task is to: \{task\_description\}.

\{skill\_context\}

Prior to this step, you have already taken \{step\_count\} step(s). Below are the most recent \{history\_length\} observations and the corresponding actions you took: \{action\_history\}

You are now at step \{current\_step\} and your current observation is: \{current\_observation\}

Your admissible actions of the current situation are: [\{admissible\_actions\}].

Now it's your turn to take an action.
You should first reason step-by-step about the current situation. This reasoning process MUST be enclosed within \texttt{<think> </think>} tags.
Once you've finished your reasoning, you should choose an admissible action for current step and present it within \texttt{<action> </action>} tags.
\end{templatebox}
\caption{Prompt template used by \methodname{} for the ALFWorld task environment.}
\label{fig:prompt_alfworld}
\end{figure}

\begin{figure}[h]
\centering
\begin{templatebox}{Prompt of \methodname on Search-based QA}
You are an expert agent tasked with answering the given question step-by-step.

\{skill\_context\}

Your question: \{task\_description\}.

Prior to this step, you have already taken \{step\_count\} step(s). Below is the interaction history where \texttt{<search> </search>} wrapped your past search queries and \texttt{<information> </information>} wrapped the corresponding search results returned by the external search engine. History:

\{memory\_context\}

Now it's your turn to respond for the current step.
You should first conduct a reasoning process. This process MUST be enclosed within \texttt{<think> </think>} tags.
After completing your reasoning, choose only one of the following actions (do not perform both):
\begin{enumerate}
    \item If you find you lack some knowledge, you \textbf{MUST} call a search engine to get more external information using format: \texttt{<search> your query </search>}.
    \item If you have enough knowledge to answer the question confidently, provide your final answer within \texttt{<answer> </answer>} tags, without detailed illustrations. For example, \texttt{<answer>Beijing</answer>}.
\end{enumerate}
\end{templatebox}
\caption{Prompt template used by \methodname{} for the Search-based QA task environment.}
\label{fig:prompt_searchqa}
\end{figure}

\begin{figure}[h]
\centering
\begin{templatebox}{Prompt of \methodname on WebShop}
You are an expert autonomous agent operating in the WebShop e-commerce environment.

\{skill\_context\}

Your task is to: \{task\_description\}.

Prior to this step, you have already taken \{step\_count\} step(s). Below are the most recent \{history\_length\} observations and the corresponding actions you took: \{action\_history\}

You are now at step \{current\_step\} and your current observation is: \{current\_observation\}.

Your admissible actions of the current situation are:
[
\{available\_actions\}
].

Now it's your turn to take one action for the current step.
You should first reason step-by-step about the current situation, then think carefully which admissible action best advances the shopping goal. This reasoning process MUST be enclosed within \texttt{<think> </think>} tags.
Once you've finished your reasoning, you should choose an admissible action for current step and present it within \texttt{<action> </action>} tags.
\end{templatebox}
\caption{Prompt template used by \methodname{} for the WebShop task environment.}
\label{fig:prompt_webshop}
\end{figure}

\end{document}